\documentclass[letterpaper, 10 pt, conference]{ieeeconf}  % Comment this line out if you need a4paper

\IEEEoverridecommandlockouts                              % This command is only needed if 
\usepackage{graphics} % for pdf, bitmapped graphics files
\usepackage{epsfig} % for postscript graphics files
\usepackage{times} % assumes new font selection scheme installed
\usepackage{amsmath} % assumes amsmath package installed
\usepackage{amssymb}  % assumes amsmath package installed
\usepackage{booktabs}
\usepackage{multicol}
\usepackage{multirow}
\usepackage{makecell}
\usepackage{tabularx}
\usepackage{array}
\usepackage[bookmarks=true]{hyperref}

\usepackage{cite}
\usepackage{algorithm, algpseudocode}
\let\labelindent\relax
\usepackage{enumitem}

\DeclareMathOperator*{\argmax}{arg\,max}
\DeclareMathOperator*{\argmin}{arg\,min}
\newtheorem{definition}{Definition}

\newtheorem{lemma}{Lemma}
\newtheorem{theorem}{Theorem}
\newtheorem{assumption}{Assumption}

\newtheorem{corollary}{Corollary}

\title{\LARGE \bf
    Value-Preserving Action Quantization
}

\author{Max Muchen Sun, Jueun Kwon, Benjamin Riviere, and Todd Murphey
\thanks{Max Muchen Sun, Jueun Kwon and Todd Murphey are with the Department of Mechanical Engineering, Northwestern University, Evanston, IL 60208, USA. Email: {\tt\small msun@u.northwestern.edu}.}
\thanks{Benjamin Riviere is with the Department of Mechanical and Aerospace Engineering and the Department of Computer Science, New York University, New York City, NY 10012, USA}
\thanks{This work is supported by ARO grant W911NF-22-1-0286. The views expressed are the authors’ and not necessarily those of the funders.}
}

\begin{document}

\allowdisplaybreaks

\maketitle

\thispagestyle{empty}
\pagestyle{empty}

%%%%%%%%%%%%%%%%%%%%%%%%%%%%%%%%%%%%%%%%%%%%%%%%%%%%%%%%%%%%%%%%%%%%%%%%%%%%%%%%

\begin{abstract}

Search-based policy learning, such as Monte Carlo tree search (MCTS), offers the flexibility to trade runtime computation for data efficiency and adaptation to novel environments. For robot learning, however, applying search requires discretizing the continuous action space and associated dynamics, typically through action quantization. While reconstruction-based quantization is sufficient for representing the continuous action space alone, policy learning additionally requires the quantized actions to preserve their effects on future states and values. We show that such requirement is not necessarily captured by action reconstruction error. Instead, we prove that the error in the value function resulting from action quantization is tightly bounded by the reconstruction error of state reachability under the quantized actions. Based on these results, we propose value-preserving action quantization (VAQ), which quantizes actions by preserving state reachability. We further combine VAQ with MCTS (VAQ-MCTS) as a value-preserving, search-based policy learning framework for behavioral cloning. Using a wide range of object pushing tasks as a controlled benchmark, we evaluate VAQ-MCTS against state-of-the-art behavioral cloning methods. VAQ-MCTS matches or exceeds their performance under standard settings, while substantially improving data efficiency and robustness to heterogeneous demonstrations, while enabling zero-shot adaptation to novel environments. Hardware experiments further demonstrate the effectiveness of VAQ-MCTS, completing all 30 real-world object pushing trials with only 10 minutes of demonstration data.

\end{abstract}

%%%%%%%%%%%%%%%%%%%%%%%%%%%%%%%%%%%%%%%%%%%%%%%%%%%%%%%%%%%%%%%%%%%%%%%%%%%%%%%%

\section{Introduction}

Search-based policy learning offers an alternative to relying entirely on offline policy learning by allocating computation at runtime~\cite{silver_mastering_2016, grill_monte-carlo_2020, castellini_scalable_2023}. For robot learning, this means that a robot can use limited offline behavioral data while adapting its decisions to the current task and environment. This tradeoff is particularly attractive when collecting large-scale demonstrations is expensive and deployment conditions often differ from those encountered during training. 

However, applying search to robot learning problems remains challenging because the robot's continuous action space needs to be discretized, and the corresponding branching factor grows rapidly with the dimensionality and resolution of the action space. Action quantization---the construction of a discrete, low-dimensional representation of continuous actions~\cite{van_den_oord_neural_2017}---offers a potential solution and has been increasingly used in robot learning~\cite{hubert_learning_2021, dadashi_continuous_2022, luo_action-quantized_2023, lee_behavior_2024, wu_discrete_2025}. By replacing the continuous action space with a compact set of well-represented discrete actions, action quantization can make runtime learning with online search computationally feasible with minimal compromise in performance.

However, action quantization is not simply action discretization. For policy search, the quantized actions must represent not only the action space, but also the consequences of actions. For robot learning in particular, nonlinear dynamics could skew how actions affect state evolution and, subsequently, the performance of the quantized actions. In this paper, we study the consequences of action quantization from the perspective of value preservation: \emph{How do quantized actions affect the performance of the optimal policy compared to the original continuous action space?} We formally show that the error in the optimal value function induced by action quantization is not necessarily determined by how well the quantized actions reconstruct the original action space, but by how well the resulting quantized dynamics reconstruct the reachable state set of the original system.

Based on this result, we propose value-preserving action quantization (VAQ), adapting vector-quantized variational autoencoders (VQ-VAEs)~\cite{van_den_oord_neural_2017} to preserve state reachability from behavioral data. We combine VAQ with Monte Carlo tree search (MCTS), a search algorithm that enables policy learning through online planning, to construct a behavioral cloning framework (VAQ-MCTS) that uses offline demonstration data to learn a compact action representation and performs online policy learning through runtime search. Compared to offline behavioral cloning methods, including those with action quantization, VAQ-MCTS can adapt zero-shot to novel environments through online planning while requiring limited offline data. Compared to MCTS with naive action quantization or discretization, VAQ maintains MCTS performance even under extremely small action budgets. An overview of the proposed framework is shown in Fig.~\ref{fig:overview}.

\begin{figure*}[ht!] 
    \centering    
    \includegraphics[width=\linewidth]{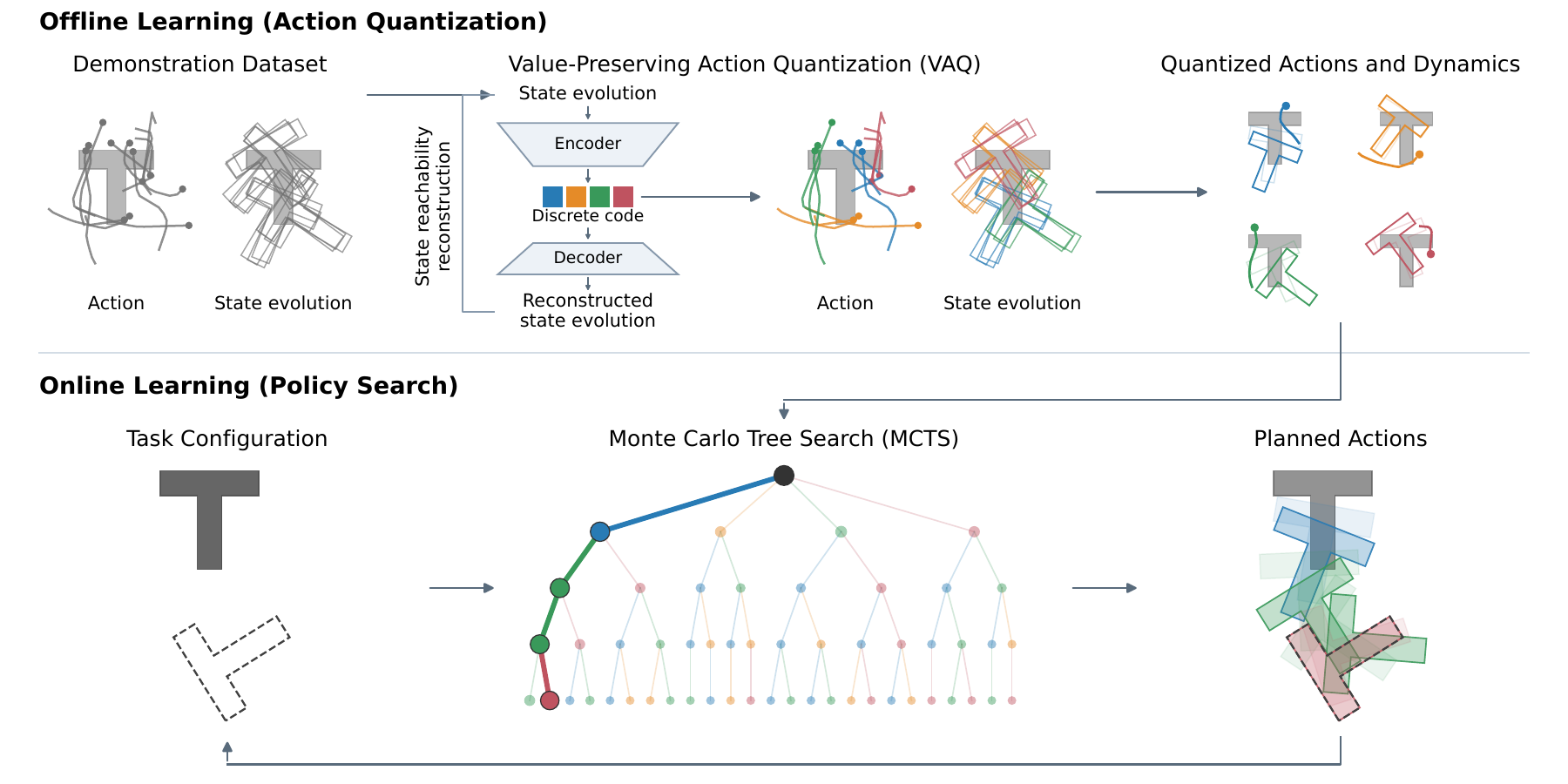}
    \vspace{-2em}
    \caption{Overview of VAQ-MCTS. VAQ learns a discrete latent representation that preserves state reachability; in this example, four discrete latent codes of VQ-VAE capture representative state evolutions. This compact representation then enables efficient runtime policy learning via MCTS planning.}
    \label{fig:overview}
    \vspace{-1em}
\end{figure*}

Through comprehensive simulated benchmark evaluations, we show that VAQ-MCTS matches or exceeds state-of-the-art behavioral cloning methods while improving data efficiency, adaptation to novel environments, and robustness to heterogeneous training data. On hardware, VAQ-MCTS achieves 100\% success across 30 trials and three manipulation tasks using only 10 minutes of demonstration data. We summarize our contributions in three points below:

\begin{enumerate}[leftmargin=*, label=(\arabic*)]
    \item We derive a value approximation bound for action quantization based on reachable-state reconstruction.
    \item We propose value-preserving action quantization (VAQ) that preserves state reachability.
    \item We combine VAQ with MCTS as an online policy learning framework via efficient runtime search.
\end{enumerate}

\section{Preliminaries and Related Work}

\subsection{Action Quantization}

We denote a Markov decision process (MDP) as $\mathcal{M} = (\mathcal{S}, \mathcal{A}, p, r, \gamma)$, where $\mathcal{S}$ is the continuous state space, $\mathcal{A}$ is the continuous action space, $p(s^\prime|s,a)$ is the state transition distribution, $r(s,a)$ is the reward function, and $\gamma \in [0,1)$ is the discount factor. An \emph{action quantizer} is defined as a mapping from the system state $s$ to a finite set of actions:
\begin{align}
    g_{\phi}(s) = \{ \bar{a}_1(s), \dots, \bar{a}_K(s) \}.
\end{align} 

\begin{definition}[Action quantization]
    Denote a behavioral dataset as 
    $\mathcal{D}=\{(s_i,a_i)\}_{i=1}^N$, where $s_i$ is the system state. Action quantization is defined as the optimization problem:
    \begin{gather}
        \phi^* = \argmin_{\phi} \sum_{i=1}^N D\Big(\mu_{\mathcal{D}}(\cdot|s_i), \nu_{g_{\phi}}(\cdot|s_i)\Big), \label{eq:aq_def} \\
        \mu_{\mathcal{D}}(a|s) = \frac{1}{M}
        \sum_{j=1}^N \mathbf{1}[s=s_j]\delta(a-a_j), \nonumber \\
        M = \sum_{j=1}^N \mathbf{1}[s=s_j], \text{ } \nu_{g_{\phi}}(a|s_i) = \frac{1}{K} \sum_{k=1}^K \delta\big(a-\bar a_k(s)\big), \nonumber
    \end{gather} where $\delta(\cdot)$ is the Dirac delta function and $\mathbf{1}[\cdot]$ is the indicator function. $\mu_{\mathcal{D}}(a|s)$ is the empirical action distribution, $\nu_{g_{\phi}}(a|s)$ is the quantized action distribution, and $D$ is a discrepancy measure between action distributions. 
\end{definition} 

Several specifications of the quantization cost are used in practice based on (\ref{eq:aq_def}), among which the vector quantization (VQ) cost~\cite{van_den_oord_neural_2017} is widely used:
\begin{gather}
    \phi^* = \argmin_{\phi} \sum_{i=1}^N \min_{k} \| a_i - \bar{a}_k(s_i) \|^2,\\
    g_{\phi}(s) = \{ \bar{a}_1(s), \dots, \bar{a}_K(s) \}. \label{eq:general_vq_cost}
\end{gather} 

\noindent\textbf{Related work.} Simple action quantization directly discretizes the original action space $\mathcal{A}$ using a discrete action set. However, direct discretization suffers from exponential complexity in the discretization resolution and non-uniform action utilization. Prior work has addressed these issues through factorized~\cite{tavakoli_action_2018, tang_discretizing_2020, seyde_solving_2022}, autoregressive~\cite{tessler_distributional_2019, chebotar_q-transformer_2023}, and graph-based~\cite{tavakoli_learning_2020} discretization schemes, as well as adaptive action quantization methods that learn state-dependent discrete action sets from data~\cite{dadashi_continuous_2022}\cite{luo_action-quantized_2023}. Existing approaches primarily use action discretization to accelerate policy learning. In contrast, this work studies action quantization as a value-preserving compression of the original continuous problem.

\begin{figure*}[ht!]
    \centering
    \includegraphics[width=1.0\linewidth]{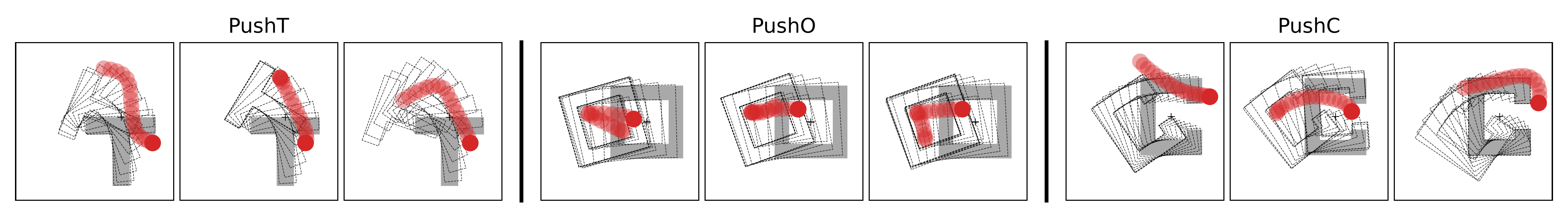}
    \vspace{-2em}
    \caption{Examples of action heterogeneity, where different action sequences (red end-effector motion) lead to similar state sequences. By Theorem~\ref{theorem:value_bound}, this mismatch can make action reconstruction-based quantization suboptimal for policy learning, motivating VAQ to reconstruct state reachability.}
    \label{fig:redundancy_example}
    \vspace{-1em}
\end{figure*}

\subsection{Vector-Quantized Variational Autoencoders} \label{sec:vqvae}

Vector-quantized variational autoencoders (VQ-VAEs)~\cite{van_den_oord_neural_2017} learn discrete latent representations from continuous data by inserting a vector-quantization layer into an autoencoder. A codebook $\mathbf{e}{=}\{e_1,\dots,e_K\}$ is a discrete set of latent vectors, in contrast to a continuous latent space. Given an encoder network $f_{\theta}(x)$, the quantizer $z_{\mathbf{e}}(x)$, which is parameterized by the codebook $\mathbf{e}$, maps the encoder output with its nearest codebook vector $z_{\mathbf{e}}(x) {=} \argmin_{e\in\mathbf{e}} \|f_{\theta}(x)-e\|$. The decoder network $\tilde{f}_{\psi}(z)$ then reconstructs the input from the quantized latent vector. 

The training of VQ-VAE jointly optimizes the encoder network $f_{\theta}$, decoder network $\tilde{f}_{\psi}$ and codebook $\mathbf{e}$ through:
\begin{align}
    \mathcal{L}_{\mathrm{VQ}}(x; \theta, \phi, \mathbf{e}) = & \|x-\tilde{f}_{\psi}(z_{\mathbf{e}}(x))\|^2 + \|\mathrm{sg} [f_{\theta}(x)] - z_{\mathbf{e}}(x)\|^2 \nonumber \\
    & + \beta \| f_{\theta}(x) - \mathrm{sg}[z_{\mathbf{e}}(x)] \|^2, \label{eq:vqvae_loss}
\end{align} where the first term is the reconstruction loss, while the last two terms update the codebook and encourage encoder outputs to remain close to their assigned codebook vectors, thereby clustering continuous latent representations into a finite set of discrete codes. The stop-gradient operator $sg[\cdot]$ controls which parameters are updated by each term despite the non-differentiable nearest-neighbor assignment.

\noindent\textbf{Related work.} VQ-VAE-based action quantization has been applied to both reinforcement learning and behavior cloning. In RL, state-conditioned VQ-VAE creates discrete-action problems in continuous settings, allowing policy constraints and conservative regularizers to be computed more precisely~\cite{luo_action-quantized_2023}. In behavior cloning, residual vector quantization has been used to tokenize continuous action chunks, enabling transformer policies to model multimodal behaviors while retaining efficient single-pass inference~\cite{lee_behavior_2024}. In this work, we use VQ-VAE-based action quantization to make online policy learning computationally feasible without compromising policy performance or optimality.

\subsection{Monte Carlo Tree Search}

Monte Carlo Tree Search (MCTS) is a search-based framework for planning over discrete decision spaces~\cite{kocsis_bandit_2006}\cite{swiechowski_monte_2023}. Similar to value-based reinforcement learning methods such as Q-learning, MCTS estimates action values and derives a policy by biasing action selection toward higher expected returns. MCTS performs lookahead search from the current state by repeatedly simulating candidate action sequences, expanding newly visited states, estimating trajectory returns, and backpropagating these estimates to update action values along the tree. The rollout policy is obtained from the root search statistics, typically by selecting the action with the highest visit count or estimated value. In this work, MCTS serves as the backbone of online learning, providing a principled anytime optimization framework over the quantized action space with guaranteed asymptotic optimality given sufficient computation. Action quantization reduces the branching factor of MCTS, making it computationally efficient without sacrificing decision quality.

\noindent\textbf{Related work.} MCTS has been central to combining planning and learning in game-playing~\cite{silver_mastering_2016}\cite{ silver_general_2018}\cite{schrittwieser_mastering_2020}, where decision-time search improves a learned policy and generates targets for subsequent learning. In continuous control, however, constructing an effective discrete action space remains challenging. Closest in spirit to our work is spectral expansion tree search~\cite{riviere_monte_2024}, which constructs a low-complexity discrete expansion from a known dynamics model to preserve planning-relevant structure, leading to a guaranteed bound on value approximation. Similar spectral-expansion ideas have also been applied to contact-rich manipulation~\cite{lathrop_model_2024}\cite{sigurdson_spectral_2026}. Our work shares this planning-preservation perspective, but instead learns quantized actions from data.

\begin{figure*}[ht!]
    \centering
    \includegraphics[width=1.0\linewidth]{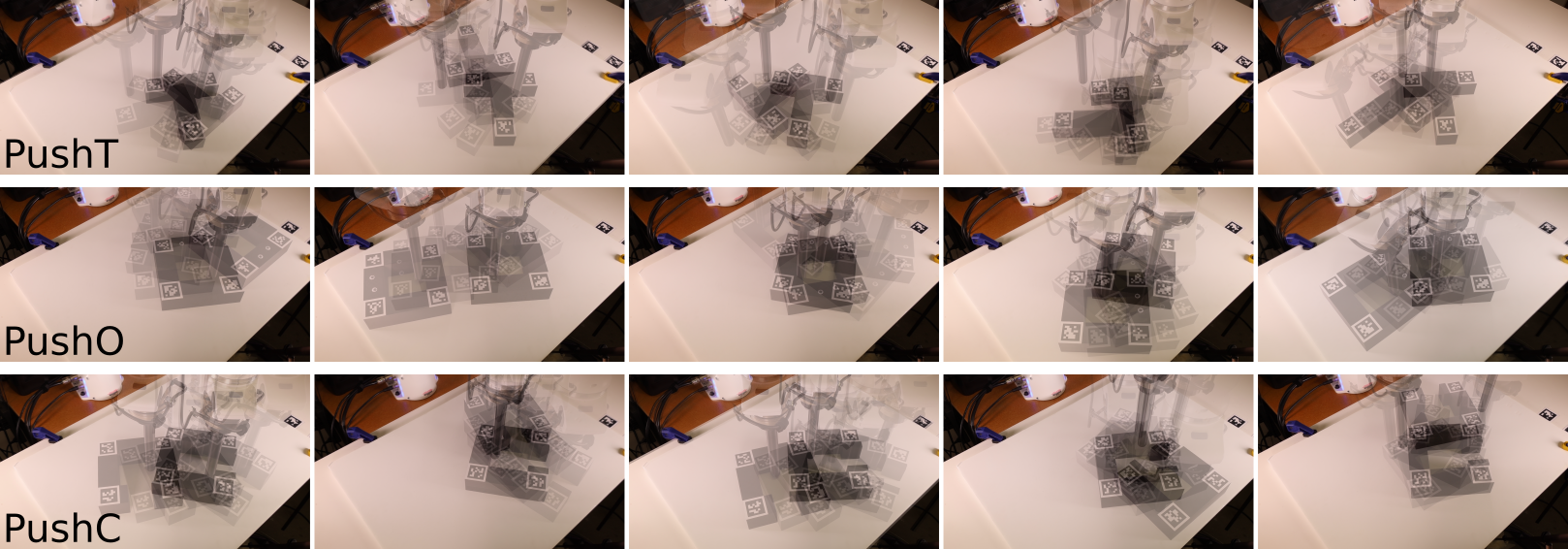}
    \vspace{-2em}
    \caption{VAQ extracts 8 actions from 10 minutes of undirected human demonstrations, achieving 100\% success across 30 trials and 3 objects.}
    \label{fig:vaq_franka}
    \vspace{-1em}
\end{figure*}

\section{Value-Preserving Action Quantization}

\subsection{The Value Approximation Bound of Action Quantization}

\begin{definition}[Optimal value function]
    For an MDP $\mathcal{M}=(\mathcal{S}, \mathcal{A}, p, r, \gamma)$, the optimal value function $V^*(s)$ represents the maximum expected reward achievable from a given state by any policy:
    \begin{align}
        V^*(s) &= \max_{\pi} V^{\pi}(s), \text{ } V^{\pi}(s) = \mathbb{E}\left[ \sum_{t=0}^{\infty} \gamma^t r(s_t, a_t) \Big| s_0 {=} s \right], \nonumber
    \end{align} where $V^{\pi}(s)$ is the value function of the given policy $\pi$. 
\end{definition}

\begin{definition}[Probabilistic reachable set]
    For an MDP $\mathcal{M}=(\mathcal{S}, \mathcal{A}, p, r, \gamma)$, we define:
    \begin{align}
        R_{\mathcal{M}}(s) = \{ p(\cdot|s,a) | a\in\mathcal{A} \}.
    \end{align} as the probabilistic reachable set at any given state $s\in\mathcal{S}$.
\end{definition}

\begin{definition}[Hausdorff distance]
    Given a domain $\mathcal{X}$, the Hausdorff distance between two sets $X, X^\prime\subseteq\mathcal{X}$ is:
    \begin{align}
        d_{\epsilon}(X, X^\prime) = \max\left\{ \sup_{X} \inf_{X^\prime} \epsilon(x, x^\prime), \sup_{X^\prime} \inf_{X} \epsilon(x, x^\prime) \right\}, \label{eq:hausdorff}
    \end{align} where $\epsilon(x,x^\prime)$ is a distance metric over $\mathcal{X}$.
\end{definition}

\begin{definition}[Probabilistic reachability bound]
    Consider two MDPs $\mathcal{M}_1=(\mathcal{S},\mathcal{A}_1,p,r,\gamma)$ and $\mathcal{M}_2=(\mathcal{S},\mathcal{A}_2,p,r,\gamma)$ that differ only in their action spaces. We define the probabilistic reachability bound as the supremum, over all states, of the $W_1$ Wasserstein Hausdorff distance between their probabilistic reachable sets:
    \begin{align}
        B(\mathcal{M}_1,\mathcal{M}_2) = \sup_{s\in\mathcal{S}} d_{W_1}\Big(R_{\mathcal{M}_1}(s),R_{\mathcal{M}_2}(s)\Big). \label{eq:prob_reach_bound}
    \end{align}
\end{definition}

\begin{theorem}[Value approximation bound]\label{theorem:value_bound}
    Consider two MDPs $\mathcal{M}_1=(\mathcal{S},\mathcal{A}_1,p,r,\gamma)$ and $\mathcal{M}_2=(\mathcal{S},\mathcal{A}_2,p,r,\gamma)$ that differ only in their action spaces, the difference between the optimal value functions of the two MDPs is bounded by:
    \begin{align}
        \| V^*_1(\cdot) {-} V^*_2(\cdot) \|_{\infty} \leq \frac{L_r + \gamma L_V}{1 - \gamma} \cdot B(\mathcal{M}_1,\mathcal{M}_2), \label{eq:value_bound}
    \end{align} where $L_r$ is the reward Lipschitz constant, $L_V {=} \max(L_{V^1}, L{V^2})$ is the larger optimal-value Lipschitz constant, and $d{W_1}$ is the $W_1$ Wasserstein Hausdorff distance between distribution sets.
\end{theorem}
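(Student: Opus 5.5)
Fix a state $s$ and work directly with the Bellman optimality equations of the two MDPs. Since $\mathcal{M}_1$ and $\mathcal{M}_2$ share $\mathcal{S}$, $p$, $r$ and $\gamma$, we have
\begin{align}
V^*_j(s) = \sup_{a\in\mathcal{A}_j}\Big[ r(s,a) + \gamma\, \mathbb{E}_{s'\sim p(\cdot|s,a)} V^*_j(s')\Big], \quad j=1,2. \nonumber
\end{align}
Let $\Delta = \|V^*_1 - V^*_2\|_\infty$. We interpret $L_r$ as the Lipschitz constant of the reward with respect to the induced transition distribution. That is, $|r(s,a)-r(s,a')| \le L_r\, W_1(p(\cdot|s,a),p(\cdot|s,a'))$. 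This is the natural reading, since $B$ only measures distances between distributions; otherwise $r$ must be viewed as a function of the reachable distribution. We also take $V^*_j$ to be $L_V$-Lipschitz on $\mathcal{S}$. With these conventions, the aim is to derive the recursion $\Delta \le (L_r + \gamma L_V) B + \gamma \Delta$, which rearranges to (\ref{eq:value_bound}).

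\textbf{Key step.} By symmetry it suffices to bound $V^*_1(s) - V^*_2(s)$. Fix $\eta>0$ and choose $a_1\in\mathcal{A}_1$ that is $\eta$-optimal for $V^*_1(s)$. Because $d_{W_1}(R_{\mathcal{M}_1}(s),R_{\mathcal{M}_2}(s)) \le B$, the $\sup\inf$ term in (\ref{eq:hausdorff}) gives $a_2\in\mathcal{A}_2$ with $W_1(p(\cdot|s,a_1),p(\cdot|s,a_2)) \le B+\eta$. Write $p_1 = p(\cdot|s,a_1)$ and $p_2 = p(\cdot|s,a_2)$. Since $V^*_2(s)\ge r(s,a_2)+\gamma\mathbb{E}_{p_2}V^*_2$, we get
\begin{align}
V^*_1(s)-V^*_2(s) \le \eta + [r(s,a_1)-r(s,a_2)] + \gamma\big(\mathbb{E}_{p_1}V^*_1 - \mathbb{E}_{p_2}V^*_2\big). \nonumber
\end{align}
We split the last bracket by inserting $\mathbb{E}_{p_1}V^*_2$:
\begin{align}
\mathbb{E}_{p_1}V^*_1 - \mathbb{E}_{p_2}V^*_2 = \mathbb{E}_{p_1}(V^*_1-V^*_2) + \big(\mathbb{E}_{p_1}V^*_2-\mathbb{E}_{p_2}V^*_2\big). \nonumber
\end{align}
The first piece is at most $\Delta$. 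The second is at most $L_{V^2}W_1(p_1,p_2)\le L_V(B+\eta)$ by Kantorovich--Rubinstein duality. The reward difference is at most $L_r(B+\eta)$. Letting $\eta\to 0$ and symmetrizing, using the other half of the Hausdorff distance and $L_{V^1}$, gives $\Delta \le (L_r+\gamma L_V)B + \gamma\Delta$.

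\textbf{Finiteness of $\Delta$.} The rearrangement requires $\Delta<\infty$, which holds if rewards are bounded, since then both values lie in $[-\|r\|_\infty/(1-\gamma), \|r\|_\infty/(1-\gamma)]$. If one prefers not to assume this, a cleaner route is to run the same argument on value iteration iterates $V^{(n)}_j$ starting from $0$. This yields $\|V^{(n+1)}_1-V^{(n+1)}_2\|_\infty \le (L_r+\gamma L_V)B+\gamma\|V^{(n)}_1-V^{(n)}_2\|_\infty$, and we then pass to the limit. However, this needs Lipschitz constants for the iterates rather than for $V^*_j$, so I would stay with the fixed-point version and assume bounded rewards.

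\textbf{Main obstacle.} The delicate point is the reward term. The theorem states $L_r$ without a metric, and $B$ controls only $W_1$ distances between transition kernels, not distances between actions. I would therefore make the Lipschitz assumption on $r$ with respect to the induced next-state distribution explicit. Attempts to use action-space Lipschitzness cannot work, since distinct actions with identical transitions may have very different rewards. A lesser issue is attainment of the suprema in the Bellman equation and in the Hausdorff distance, which the $\eta$-approximations handle.
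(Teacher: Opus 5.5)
Your proof is correct and follows the paper's argument: inserting $\mathbb{E}_{p_1}V^*_2$ is exactly the paper's add-and-subtract of $\mathcal{T}_1V^*_2$, with the Bellman contraction step carried out pointwise, and your reward condition $|r(s,a)-r(s,a')|\le L_r W_1(p(\cdot|s,a),p(\cdot|s,a'))$ is precisely the paper's Assumption 1. Your $\eta$-near-optimal choices and your remark that the rearrangement needs $\Delta<\infty$ (e.g.\ bounded rewards) address points the paper skips, since it assumes the argmax/argmin are attained and that the sup-norm difference is finite.
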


Theorem~\ref{theorem:value_bound} (see Appendix for the proof) formalizes the intuition that action-space replacement is value-preserving only to the extent that it preserves the state-transition distributions that are reachable from each state. For learning-based action quantization, this implies that the relevant supervision is not from action reconstruction, but from reconstruction of the dynamics-dependent state reachable sets.

%%%%%%%%%%%%%%%%%%%%%%

\subsection{Action Reconstruction vs. Value Preservation}

\begin{corollary}
    Consider an action quantizer $g_\phi$ trained by minimizing the action reconstruction loss in~\eqref{eq:general_vq_cost}. If the transition model preserves action distances in the induced state-transition space, meaning that there exist constants $0<\alpha\leq\beta<\infty$ such that, $\forall s,a,a'$:
    \begin{align}
        \alpha \|a-a'\| \leq W_1\big(p(\cdot|s,a),p(\cdot|s,a')\big) \leq \beta \|a-a'\|,
    \end{align}
    then action reconstruction serves as a faithful proxy for the reachable-set discrepancy in Theorem~\ref{theorem:value_bound}. Therefore, the resulting quantizer is value-preserving.
\end{corollary}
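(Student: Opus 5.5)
The plan is to sandwich the reachable-set discrepancy $B(\mathcal{M}_1,\mathcal{M}_2)$ between constant multiples of an action-space Hausdorff distance, then feed the upper bound into Theorem~\ref{theorem:value_bound}. Take $\mathcal{M}_1$ as the MDP with the continuous action set $\mathcal{A}$ (or, in the data-driven reading, the support of $\mu_{\mathcal{D}}(\cdot|s)$). Take $\mathcal{M}_2$ as the MDP whose state-dependent action set is $g_\phi(s)=\{\bar a_1(s),\dots,\bar a_K(s)\}$. At each state $s$, define the action Hausdorff distance
\begin{align}
    \Delta_{\mathcal{A}}(s) = d_{\|\cdot\|}\big(\mathcal{A}_1(s), g_\phi(s)\big), \nonumber
\end{align}
using \eqref{eq:hausdorff} with the Euclidean metric on actions.

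\textbf{Step 1 (upper transfer).} Fix $s$. For any $a\in\mathcal{A}_1(s)$, choose $\bar a_k(s)$ attaining $\inf_k\|a-\bar a_k(s)\|$. The Lipschitz half of the hypothesis gives $W_1(p(\cdot|s,a),p(\cdot|s,\bar a_k))\le\beta\|a-\bar a_k\|$. Taking the infimum over $k$ and the supremum over $a$ bounds the first term of the reachable-set Hausdorff distance by $\beta\Delta_{\mathcal{A}}(s)$. The symmetric argument handles the second term. Hence $d_{W_1}(R_{\mathcal{M}_1}(s),R_{\mathcal{M}_2}(s))\le\beta\Delta_{\mathcal{A}}(s)$. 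The elementwise map $a\mapsto p(\cdot|s,a)$ is surjective onto each reachable set, so no distributions are missed.

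\textbf{Step 2 (lower transfer).} The same argument with the co-Lipschitz constant $\alpha$ gives
\begin{align}
    \alpha\Delta_{\mathcal{A}}(s)\le d_{W_1}\big(R_{\mathcal{M}_1}(s),R_{\mathcal{M}_2}(s)\big). \nonumber
\end{align}
Together with Step 1, the reachable-set discrepancy and the action Hausdorff distance are equivalent up to the factors $\alpha$ and $\beta$, uniformly in $s$. This is what \emph{faithful proxy} means: a quantizer that reduces one quantity reduces the other, and the ordering of quantizers by action error matches the ordering by reachability error up to the condition number $\beta/\alpha$.

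\textbf{Step 3 (value preservation).} Taking $\sup_s$ in Step 1 and applying Theorem~\ref{theorem:value_bound} yields
\begin{align}
    \|V_1^*-V_2^*\|_\infty\le\frac{(L_r+\gamma L_V)\beta}{1-\gamma}\sup_s\Delta_{\mathcal{A}}(s). \nonumber
\end{align}

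\textbf{Main obstacle.} The VQ objective in \eqref{eq:general_vq_cost} does not control $\sup_s\Delta_{\mathcal{A}}(s)$. It controls only an empirical average of the one-sided term $\min_k\|a_i-\bar a_k(s_i)\|^2$ over data. Two gaps must be closed.

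\emph{(i) Averaged versus worst-case error.} I would first try to state the result under an explicit coverage assumption: the data supports the action set at each state, and the trained loss is small enough that $\sup_{a\in\mathcal{A}_1(s)}\min_k\|a-\bar a_k(s)\|\le\eta$. Failing that, I would use an $L^\infty$ version of the loss. Either way, the reduction to empirical loss holds only under an extra assumption, and I would state that assumption openly rather than prove it.

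\emph{(ii) The one-sided loss.} The loss bounds only one direction of the Hausdorff distance. The reverse direction requires each codeword $\bar a_k(s)$ to lie near some demonstrated action. This holds at a stationary point of the VQ loss, where each used codeword is the centroid of its assigned actions; I would additionally assume convexity of the action set at $s$, so centroids stay in the set. Unused codewords would be discarded.
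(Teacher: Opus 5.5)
Your proposal is correct and takes the same route as the paper. The paper's whole argument is one sentence: under the bi-Lipschitz condition, action-space reconstruction error controls the reachable-set discrepancy $B(\mathcal{M}_1,\mathcal{M}_2)$, which is then fed into Theorem~\ref{theorem:value_bound}. Your Steps 1--3 make the $\beta$ (upper) and $\alpha$ (lower) transfers explicit.

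Your ``main obstacle'' discussion is more careful than the paper, which silently treats the empirical, one-sided VQ loss as the worst-case Hausdorff quantity; making the uniform coverage bound $\eta$ an explicit assumption is the right fix. With that bound in place, the reverse direction already holds for every retained codeword, since each has an assigned action within $\eta$, so you do not need the centroid and convexity detour.
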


This corollary follows directly from Theorem~\ref{theorem:value_bound}: under the distance-preserving condition, action-space reconstruction controls the reachable-set discrepancy that bounds the value approximation error. However, when the transition model distorts action distances, action reconstruction is no longer guaranteed to be value-aligned. One issue that is particularly relevant is heterogeneous demonstration data, where different demonstrations contain substantially different actions for achieving similar outcomes. In robotic systems, nonlinear dynamics can further deepen this mismatch: small action differences may induce large changes in state evolution (e.g., cart-pendulum, drone, and humanoid), while large action differences may induce similar state evolution (e.g., manipulation tasks, as illustrated in Fig.~\ref{fig:redundancy_example}). In such cases, action-space reconstruction can be misaligned with the value-relevant reachable-set discrepancy, such that quantized actions that are optimal for action reconstruction are suboptimal for value preservation.

%%%%%%%%%%%%%%%%%%%%%%

\begin{algorithm}[t!] 
\caption{VAQ-MCTS} \label{alg:vaq_mcts}
    \begin{algorithmic}[1] 
        \Require Dataset $\mathcal{D}$, codebook size $K$, horizon $H$
        \State Construct $H$-step dataset $\mathcal{D}_H$
        \State Learn state quantizer $h_\phi$ (\ref{eq:vaq_quantizer}) by minimizing (\ref{eq:vaq_vae_loss})
        \State Learn inverse dynamics $q(\mathbf{a}\mid s,\mathbf{s}^\prime)$
        \For{each decision step at state $s$}
            \State Run MCTS with branches $\bar{\mathbf{s}}^\prime_k \in h_\phi(s)$
            \State $k^* \gets$ best root branch returned by MCTS
            \State $\bar{\mathbf{a}} \sim q(\mathbf{a}\mid s,\bar{\mathbf{s}}^\prime_{k^*})$
            \State Execute $\bar{\mathbf{a}}$
        \EndFor
    \end{algorithmic}
\end{algorithm}

\begin{figure*}[ht!]
    \centering
    \includegraphics[width=\linewidth]{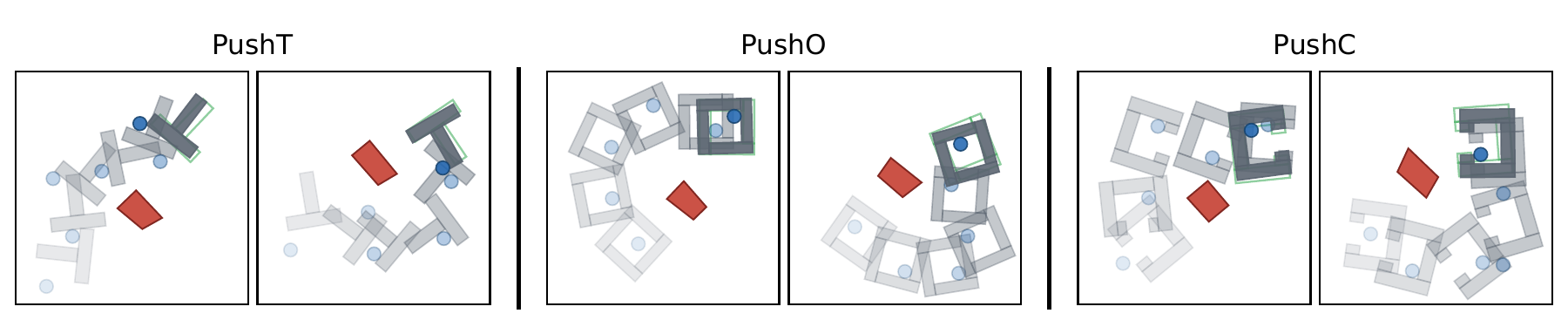}
    \vspace{-2.5em}
    \caption{Illustration of simulated benchmarks (PushT, PushO, and PushC). Robot pushes each object to the goal while adapting to unforeseen obstacles.}
    \label{fig:benchmark_illustration}
    \vspace{-1em}
\end{figure*}

\subsection{Value-Preserving Action Quantization (VAQ)} 
\label{subsec:vaq}

The above analysis suggests that, value preservation in action quantization requires supervision not based on reconstruction of the observed actions, but the reconstruction of the observed state reachable set from the dataset. 

Denote a dataset $\mathcal{D}=\{(s_i,a_i,s_i^\prime)\}_{i=1}^N$, where $s_i$ is the current state, $a_i$ is the current action, and $s_i^\prime$ is the next state. We first construct an $H$-step dataset $\mathcal{D}_{H}{=}\{(s_i,\mathbf{a}_i,\,\mathbf{s}_i^\prime)\}_{i=1}^M$, where $\mathbf{a}_i{=}[a_i, \dots, a_{i+H}]$ is an H-step action sequence and $\mathbf{s}_i^\prime{=}[s_i^\prime, \dots, s_{i+H}^\prime]$ is the resulting state evolution sequence by applying $\mathbf{a}_i$ at the state $s_i$. Note that, the $H$-step dataset becomes the standard next-state dataset when $H{=}1$. Next, we construct an $H$-step state quantizer denoted as $h_{\phi}(s) {=} \{ \bar{\mathbf{s}}^\prime_1(s), \dots, \bar{\mathbf{s}}^\prime_K(s) \}$, and optimize it with vector quantization cost over state evolution sequences:
\begin{gather}
    \phi^* = \argmin_{\phi} \sum_{i=1}^M \min_{k} \| \mathbf{s}^\prime_i - \bar{\mathbf{s}}^\prime_k(s_i) \|^2. \label{eq:state_vq_cost}
\end{gather} The state quantizer $h_{\phi}(s)$ can be implemented as a state-conditioned version of the VQ-VAE architecture introduced in Section~\ref{sec:vqvae}, including the encoder network $f_{\theta}(\mathbf{s},s)$, decoder network $\tilde{f}_{\psi}(z,s)$, the codebook $\mathbf{e}$, and the quantizer $z_{\mathbf{e}}(\mathbf{s},s) {=} \argmin_{e\in\mathbf{e}} \|f_{\theta}(\mathbf{s},s){-}e\|$, with the reconstruction term in the loss function (\ref{eq:vqvae_loss}) over state evolution:
\begin{align}
    \mathcal{L}_{\mathrm{VQ}}(\mathbf{s}, s; \theta, \psi, \mathbf{e}) = & \|\mathbf{s}-\tilde{f}_{\psi}(z_{\mathbf{e}}(\mathbf{s},s), s)\|^2 \nonumber \\
    & + \|\mathrm{sg} [f_{\theta}(\mathbf{s}, s)] - z_{\mathbf{e}}(\mathbf{s}, s)\|^2 \nonumber \\
    & + \beta \| f_{\theta}(\mathbf{s}, s) - \mathrm{sg}[z_{\mathbf{e}}(\mathbf{s}, s)] \|^2. \label{eq:vaq_vae_loss}
\end{align} After training, the decoder and codebook together constitute the state quantizer:
\begin{gather}
    h_{\phi}(s) = \{ \tilde{f}_{\psi^*}(e_k^*, s) \}_{k\in[1\dots K]}, \quad e_k^* \in \mathbf{e}^*, \label{eq:vaq_quantizer} \\
    \theta^*, \psi^*, \mathbf{e}^* = \argmin_{\theta, \psi, \mathbf{e}} \sum_{(\mathbf{s}^\prime_i, s_i)\in\mathcal{D}_H} \mathcal{L}_{\mathrm{VQ}}(\mathbf{s}^\prime_i, s_i; \theta, \psi, \mathbf{e}).
\end{gather} After learning the state quantizer, denote a learned inverse dynamics model $q(\mathbf{a} | s, \mathbf{s}^\prime)$, which generates an action sequence $\mathbf{a}$ given the current state $s$ and state sequence $\mathbf{s}^\prime$, the value-preserving action quantizer is:
\begin{gather}
    g(s) = \{ \bar{\mathbf{a}}_k(s)\}_{k\in[1,\dots,K]}, \text{ } \bar{\mathbf{a}}_k(s) \sim q(\mathbf{a} | s, \bar{\mathbf{s}}^\prime_k), \text{ } \bar{\mathbf{s}}^\prime_k \in h_{\phi}(s). \nonumber
\end{gather} The inverse dynamics model can be learned jointly with the state quantizer or separately from the dataset. In general, a generative model can be used. At the same time, because the state evolutions are quantized and do not require continuous interpolation, retrieval-based models provide another approach that enables faster inference~\cite{pari_surprising_2022, he_demystifying_2026, pfeifer_difference-aware_2026}.

\begin{corollary}
    Since vector quantization cost (\ref{eq:general_vq_cost}) is equivalent to the squared $2$-Wasserstein distance between the empirical distribution and the quantized distribution~\cite{canas_learning_2012}, the state quantization cost (\ref{eq:state_vq_cost}), as implemented by the reconstruction term in (\ref{eq:vaq_vae_loss}), preserves the state reachability.
\end{corollary}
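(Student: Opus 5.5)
The plan is to treat the corollary as a chain of three links. First, the state quantization cost (\ref{eq:state_vq_cost}) is an empirical squared $W_2$ distance between the data's next-state distribution and the quantized one. Second, $W_2$ controls $W_1$. Third, a small $W_1$ discrepancy controls the reachability bound $B$ in Theorem~\ref{theorem:value_bound}, and hence the value error.

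\textbf{Step 1: identify the cost.} Fix a conditioning state $s$. Let $\mu_{\mathcal{D}_H}(\cdot|s)$ be the empirical distribution of the observed $H$-step evolutions $\mathbf{s}^\prime_i$ with $s_i=s$. Let $\nu_{h_\phi}(\cdot|s)$ be supported on $\{\bar{\mathbf{s}}^\prime_k(s)\}_{k=1}^K$.
\begin{itemize}
\item I will invoke the result of~\cite{canas_learning_2012}: for fixed codepoints, $\frac{1}{M}\sum_i \min_k\|\mathbf{s}^\prime_i-\bar{\mathbf{s}}^\prime_k(s)\|^2 = \min_{w} W_2^2\big(\mu_{\mathcal{D}_H}(\cdot|s),\sum_k w_k\delta_{\bar{\mathbf{s}}^\prime_k}\big)$.
\item The minimizing weights $w$ are the Voronoi cell masses, and the optimal coupling sends each sample to its nearest codepoint.
\item Summing over conditioning states shows that (\ref{eq:state_vq_cost}) equals an aggregate conditional $W_2^2$ discrepancy.
\item The VQ-VAE reconstruction term in (\ref{eq:vaq_vae_loss}) upper bounds (\ref{eq:state_vq_cost}), because the encoder's code choice is at least as costly as the nearest-code choice. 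Minimizing it therefore minimizes this $W_2^2$ quantity.
\end{itemize}

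\textbf{Step 2: pass from $W_2$ to $W_1$.} By Jensen (or Cauchy--Schwarz on the optimal coupling), $W_1\le W_2$. So a small state quantization cost gives a small $W_1$ distance between the observed and quantized next-state distributions.

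\textbf{Step 3: connect to the reachable sets.} I would interpret each codepoint $\bar{\mathbf{s}}^\prime_k(s)$ as a (Dirac or narrow) transition law $p(\cdot|s,\bar{\mathbf{a}}_k)$. This assumes the inverse dynamics $q$ realizes the decoded evolution. Under that reading, $R_{\mathcal{M}_2}(s)$ consists of these quantized laws, and $R_{\mathcal{M}_1}(s)$ is represented by the observed transitions. The Hausdorff distance has two directions.
\begin{itemize}
\item \emph{Data to quantized.} Each observed transition lies within $\min_k\|\mathbf{s}^\prime_i-\bar{\mathbf{s}}^\prime_k\|$ of some quantized law in $W_1$. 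This is because $W_1$ between Diracs equals the ground distance.
\item \emph{Quantized to data.} Each codepoint that carries positive Voronoi mass is close to its cell. Codepoints trained to be barycenters lie within the cell's spread.
\end{itemize}
Hence $d_{W_1}(R_{\mathcal{M}_1}(s),R_{\mathcal{M}_2}(s))$ is controlled by the worst-case per-sample quantization error. Plugging this into (\ref{eq:value_bound}) gives the value-preservation claim.

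\textbf{Main obstacle.} I expect the main difficulty to be the mismatch between an \emph{average} cost and a \emph{supremum} (Hausdorff, sup over states). A small average $W_2^2$ does not bound the worst sample or the worst state. I would first try to argue on the finite dataset, where
\[
\max_i \min_k\|\mathbf{s}^\prime_i-\bar{\mathbf{s}}^\prime_k\|^2 \le M\cdot \tfrac{1}{M}\sum_i\min_k\|\mathbf{s}^\prime_i-\bar{\mathbf{s}}^\prime_k\|^2 .
\]
This gives a crude but valid bound that vanishes as the cost vanishes. I would state the corollary in this "cost $\to 0$ implies $B\to 0$" sense, with stochastic transitions and unseen states handled by an explicit coverage assumption.
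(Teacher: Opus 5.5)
Your proposal follows the paper's route, and it goes further than the paper does. The paper's entire justification is the Canas--Rosasco identification (your Step~1), after which it simply asserts that reachability is preserved; it never passes to $W_1$, to the Hausdorff distance, or to $B(\mathcal{M}_1,\mathcal{M}_2)$. Your Step~1 is also more accurate than the paper's wording. The VQ cost equals $W_2^2$ only against the $K$-point measure with Voronoi weights (your $\min_w$), not against the uniform-weight $\nu_{g_\phi}$ of the action-quantization definition. Your remark that the reconstruction term in (\ref{eq:vaq_vae_loss}) upper bounds (\ref{eq:state_vq_cost}) is what actually justifies ``as implemented by''. Step~2 and your average-versus-supremum caveat are correct, and the caveat is exactly what the corollary glosses over.

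Two parts of Step~3 need fixing.

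\textbf{The quantized-to-data direction.} Your barycenter argument covers only codes with nonempty cells. Because conditioning states are continuous, each $s_i$ typically carries a single sample, so at each $s_i$ all but one of the $K$ decoded evolutions are unconstrained by the cost. You do not need this argument. Under your own realizability assumption, each $\bar{\mathbf{a}}_k(s)$ is an admissible action of the $H$-step macro-MDP, so $R_{\mathcal{M}_2}(s)\subseteq R_{\mathcal{M}_1}(s)$. The $\sup_{\mathcal{A}_2}\inf_{\mathcal{A}_1}$ half of the Hausdorff distance is then zero, and $d_{W_1}$ reduces to the covering radius $\sup_{a\in\mathcal{A}_1}\min_k W_1\big(p(\cdot|s,a),p(\cdot|s,\bar{\mathbf{a}}_k(s))\big)$.

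\textbf{The covering radius needs a coverage assumption everywhere.} The covering radius is a supremum over all of $R_{\mathcal{M}_1}(s)$. Reading $R_{\mathcal{M}_1}(s)$ as ``represented by the observed transitions'' is therefore itself a coverage assumption. It is needed even at seen states and with deterministic dynamics, not only for unseen states or stochastic transitions. The cost cannot supply it: a decoder that outputs $\mathbf{s}'_i$ for every code at $s_i$ makes (\ref{eq:state_vq_cost}) zero. Meanwhile $R_{\mathcal{M}_2}(s_i)$ collapses to a single point, which leaves $B$ at least half the $W_1$-diameter of $R_{\mathcal{M}_1}(s_i)$. So ``cost $\to 0$ implies $B\to 0$'' is false as stated.

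What you can prove is an additive bound $B\le\varepsilon_{\mathrm{cov}}+\varepsilon_{\mathrm{q}}$. Here $\varepsilon_{\mathrm{q}}$ is the worst per-sample quantization error, which your crude max-versus-sum estimate controls. $\varepsilon_{\mathrm{cov}}$ measures how densely the observed evolutions, transported to $s$ (for instance via Lipschitz dependence of the decoder on $s$), cover each $R_{\mathcal{M}_1}(s)$.
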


\noindent\textbf{From VAQ to VAQ-MCTS } The learned state quantizer $h_{\phi}(s)$ generates a discrete set of the most representative state evolutions at any given state. This discrete set can therefore be directly used as the decision space for MCTS, where each tree branch corresponds to a quantized state evolution $\bar{\mathbf{s}}^\prime_k(s)$. MCTS searches over sequences of these quantized state evolutions to maximize the estimated return, and the selected state evolution is converted to an executable action sequence through the inverse dynamics model. An overview of the process is shown in Fig.~\ref{fig:overview} and described in Algorithm~\ref{alg:vaq_mcts}.

\begin{table*}[t]
    \centering
    \small
    \setlength{\tabcolsep}{7pt}
    \renewcommand{\arraystretch}{1.15}
    \begin{tabularx}{\textwidth}{l|*{3}{>{\centering\arraybackslash}X}|*{3}{>{\centering\arraybackslash}X}|*{3}{>{\centering\arraybackslash}X}}
    \toprule
    & \multicolumn{3}{c|}{\makecell{Original Environment\\(100 Demonstration; Obstacle-Free)}} & \multicolumn{3}{c|}{Obstacle Adaptation} & \multicolumn{3}{c}{10 Demonstrations} \\
    \cmidrule(lr){2-4} \cmidrule(lr){5-7} \cmidrule(lr){8-10}
    & PushT & PushO & PushC & PushT & PushO & PushC & PushT & PushO & PushC \\
    \midrule
    VAQ-MCTS (Ours) & 92\% & \textbf{100\%} & \textbf{100\%} & \textbf{78\%} & \textbf{97\%} & \textbf{81\%} & \textbf{89\%} & \textbf{98\%} & \textbf{91\%} \\
    Baseline-MCTS & \textbf{98\%} & 71\% & 98\% & 47\% & 59\% & 58\% & 68\% & 48\% & 6\% \\
    Diffusion Policy & 86\% & 95\% & 87\% & 8\% & 8\% & 12\% & 13\% & 86\% & 5\% \\
    VQ-BeT & 54\% & 84\% & 81\% & 6\% & 4\% & 4\% & 8\% & 54\% & 5\% \\
    SAQ-BC & 20\% & 11\% & 33\% & 2\% & 1\% & 4\% & 2\% & 38\% & 3\% \\
    SAQ-IQL & 21\% & 41\% & 45\% & 2\% & 7\% & 3\% & 2\% & 19\% & 5\% \\
    SAQ-CQL & 18\% & 11\% & 26\% & 0\% & 2\% & 4\% & 3\% & 17\% & 3\% \\
    SAQ-BRAC & 11\% & 18\% & 1\% & 0\% & 4\% & 0\% & 0\% & 5\% & 0\% \\
    \bottomrule
    \end{tabularx}
    
    \caption{Benchmark performance across standard tasks, obstacle adaptation, and limited demonstration data.}
    \label{tab:benchmark-results}
    \vspace{-2em}
\end{table*}

\section{Experiments}

\subsection{Experiment Overview}

\noindent\textbf{Task design } We focus on object pushing tasks in the benchmark evaluations. Object pushing is sufficiently complex for evaluating continuous control policies, as it involves reasoning over contact points, contact sequences, and contact actions. At the same time, planar pushing is interpretable and tractable enough to design task variants that deliberately test and ablate different aspects of action quantization. In particular, its tractability enables us to consistently generate heterogeneous demonstration data with distinct pushing strategies. In addition to the standard PushT task, we design two variants in which the robot manipulates O-shaped and C-shaped objects (Fig.~\ref{fig:benchmark_illustration}). These geometries support more diverse contact actions, thereby increasing the action heterogeneity in the demonstrations. We manually collect data for each task and purposefully include multiple strategies.

\noindent\textbf{Baseline selection } We include Diffusion Policy~\cite{chi_diffusion_2024} as an offline behavioral cloning baseline. We also include two action-quantized offline policy learning frameworks: vector-quantized behavior transformer (VQ-BeT)~\cite{lee_behavior_2024}, a behavioral cloning method, and state-conditioned action quantization (SAQ)~\cite{luo_action-quantized_2023}, including behavioral cloning (SAQ-BC) and three reinforcement learning variants: conservative Q-learning (SAQ-CQL), implicit Q-learning (SAQ-IQL), and behavior regularized actor-critic (SAQ-BRAC). Both VQ-BeT and SAQ use VQ-VAE for quantization but optimize action reconstruction. Lastly, as an ablation, we implement an MCTS baseline (Baseline-MCTS) using the identical VQ-VAE architecture and MCTS configuration as VAQ-MCTS, but with action reconstruction, instead of state reachability reconstruction, in the loss function (\ref{eq:vaq_vae_loss}).

\noindent\textbf{Implementation details } We implement the VQ-VAE using 16-step trajectory windows, with MLP encoder and decoder networks and a 16-dimensional latent space. State evolutions are represented in the object body frame and transformed back to the world frame during planning and execution. We use a retrieval-based inverse dynamics model for integration with MCTS, learning a dictionary that associates action sequences with quantized state evolutions. MCTS uses a branching factor equal to the number of VQ-VAE codes (default of 8), depth of 5, and 4096 simulations per planning step. MCTS does not use privileged dynamics information or an environment simulator.

\noindent\textbf{Experiment design } We organize the benchmark around the following questions:
\textbf{Q1.} How does VAQ-MCTS perform across tasks in the original environment, under novel environmental changes, and with limited demonstration data compared with other baselines?
\textbf{Q2.} Under the same VQ-VAE architecture and MCTS framework, does the value-preserving quantization loss in (\ref{eq:state_vq_cost}) better preserves optimal value function compared with action reconstruction-based quantization?
\textbf{Q3.} Under the same MCTS framework for online policy learning, does VAQ perform better under an extremely limited quantization budget?
\textbf{Q4.} How does action heterogeneity in demonstration data affect different action quantization paradigms: VAQ, VQ-BeT, and SAQ?

\begin{figure}[t!]
    \centering
    \includegraphics[width=1.0\linewidth]{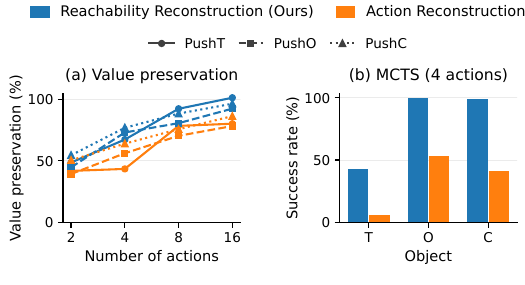}
    \vspace{-2.5em}
    \caption{Ablation on value preservation with reachability reconstruction.}
    \label{fig:ablation_plot}

    \vspace{1em}
    
    \centering
    \includegraphics[width=1.0\linewidth]{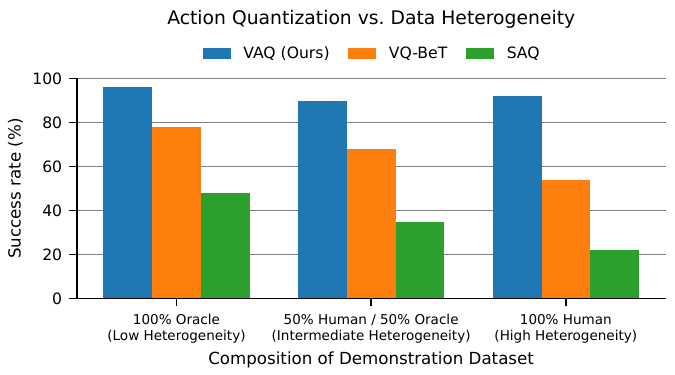}
    \vspace{-2em}
    \caption{Effects of demonstration heterogeneity on action quantization.}
    \label{fig:heterogeneity_plot}
    \vspace{-1em}
\end{figure}

\subsection{Experimental Results}

For \textbf{Q1}, we train VAQ-MCTS and Baseline-MCTS with 8 quantized latent codes. For the standard environments, all methods are trained with 100 obstacle-free demonstrations. As shown in Table~\ref{tab:benchmark-results}, VAQ-MCTS is competitive with Diffusion Policy and Baseline-MCTS in the original environments. Under obstacle adaptation and the 10-demonstration setting, VAQ-MCTS significantly outperforms all baselines across tasks, demonstrating the benefit of MCTS-based online policy learning for adaptation and data efficiency. While Baseline-MCTS performs similarly in the original environments, its performance degrades substantially when adapting to obstacles and learning from limited data, showing that online search is effective only when quantized actions preserve their effects on future states.

For \textbf{Q2}, we directly evaluate value preservation independent of the policy-learning method. At randomly sampled states, we approximate the optimal value using exhaustive tree search over a dense set of actions from the demonstration dataset. We then compare this value with those obtained from quantized actions learned using the state reachability reconstruction loss (\ref{eq:vaq_vae_loss}) and the action reconstruction loss under the same VQ-VAE architecture. We evaluate quantization budgets of 2, 4, 8, and 16 actions. As shown in Fig.~\ref{fig:ablation_plot}(a), state reachability reconstruction yields more accurate value estimates and converges to the optimal value faster, empirically validating the value-preservation bound.

For \textbf{Q3}, we limit both VAQ-MCTS and Baseline-MCTS to 4 quantized latent codes. As shown in Fig.~\ref{fig:ablation_plot}(b), VAQ-MCTS consistently and significantly outperforms Baseline-MCTS across tasks. In particular, for PushO and PushC, reducing the number of latent codes from 8 to 4 introduces minimal performance loss compared with Table~\ref{tab:benchmark-results}.

For \textbf{Q4}, we vary demonstration heterogeneity by mixing our manually collected PushT dataset with the oracle-generated dataset from~\cite{chi_diffusion_2024} at different ratios, and evaluate the three learning-based action quantization paradigms: VAQ, VQ-BeT, and SAQ. As shown in Fig.~\ref{fig:heterogeneity_plot}, the performance of both VQ-BeT and SAQ decreases as demonstration heterogeneity increases, while VAQ remains largely unaffected. This shows that value-preserving quantization is more robust to action heterogeneity in demonstration data than action reconstruction-based quantization.

\subsection{Hardware validation} We validate VAQ-MCTS on a Franka robot across the physical PushT, PushO, and PushC tasks, using only 10 minutes of undirected human play data for each object, in which the demonstrator does not complete the tasks. VAQ extracts 8 quantized latent codes, which VAQ-MCTS then uses for online planning to search for optimal policies that manipulate the objects from randomized initial states to their goal poses. VAQ-MCTS succeeds in all 30 trials across the three objects. Fig.~\ref{fig:vaq_franka} shows representative trials, with more details included in the supplementary video.

\section{Discussion}

In this work, we propose value-preserving action quantization (VAQ), which extracts discrete representations of continuous actions by preserving the dynamics-induced state reachability of the system. Our analysis shows that accounting for robot dynamics in action quantization is necessary for preserving the value structure relevant to downstream decision making. By combining VAQ with Monte Carlo tree search (MCTS), we show that effective offline action quantization enables computationally efficient online policy learning, which enables zero-shot adaptation to novel environments and significantly improved data efficiency compared to state-of-the-art offline policy learning methods.

This work suggests a direction where the value of offline data is not to create unified end-to-end policy, but to construct unified action representations for efficient online learning. Current limitations include dependence on the coverage of the offline dataset and the tradeoff between quantization resolution and search efficiency, particularly for longer-horizon and higher-dimensional control. Future work will address these limitations through learning-based state representations, including latent-space models for stable long-horizon prediction~\cite{hafner_dream_2019,wu_daydreamer_2023}, and adaptive quantization.

% \addtolength{\textheight}{-2cm} 

\bibliographystyle{IEEEtran} %  IEEE style
\bibliography{references} 

@incollection{kocsis_bandit_2006,
	address = {Berlin, Heidelberg},
	title = {Bandit {Based} {Monte}-{Carlo} {Planning}},
	volume = {4212},
	isbn = {978-3-540-45375-8 978-3-540-46056-5},
	url = {http://link.springer.com/10.1007/11871842_29},
	doi = {10.1007/11871842_29},
	urldate = {2026-10-12},
	booktitle = {Machine {Learning}: {ECML} 2006},
	publisher = {Springer Berlin Heidelberg},
	author = {Kocsis, Levente and Szepesvári, Csaba},
	year = {2006},
	note = {Series Title: Lecture Notes in Computer Science},
	pages = {282--293},
}

@article{pfeifer_difference-aware_2026,
	title = {Difference-{Aware} {Retrieval} {Policies} for {Imitation} {Learning}},
	volume = {2026},
	url = {https://proceedings.iclr.cc/paper_files/paper/2026/hash/8c2ecd140ebe227a531edc9a0c3b69d4-Abstract-Conference.html?utm_source=chatgpt.com},
	language = {en},
	urldate = {2026-09-15},
	journal = {International Conference on Learning Representations},
	author = {Pfeifer, Quinn and Pronovost, Ethan and Shah, Paarth and Khetarpal, Khimya and Srinivasa, Siddhartha and Gupta, Abhishek},
	month = apr,
	year = {2026},
	pages = {87175--87198},
}

@article{he_demystifying_2026,
	title = {Demystifying {Robot} {Diffusion} {Policies}: {Action} {Memorization} and a {Simple} {Lookup} {Table} {Alternative}},
	volume = {2026},
	shorttitle = {Demystifying {Robot} {Diffusion} {Policies}},
	url = {https://proceedings.iclr.cc/paper_files/paper/2026/hash/122ea6470232ee5e79a2649243348005-Abstract-Conference.html?utm_source=chatgpt.com},
	language = {en},
	urldate = {2026-09-15},
	journal = {International Conference on Learning Representations},
	author = {He, Chengyang and Liu, Xu and Sznaier Camps, Gadiel and Bruno, Joseph and Sartoretti, Guillaume and Schwager, Mac},
	month = apr,
	year = {2026},
	pages = {10890--10917},
}

@inproceedings{pari_surprising_2022,
	title = {The {Surprising} {Effectiveness} of {Representation} {Learning} for {Visual} {Imitation}},
	isbn = {978-0-9923747-8-5},
	url = {http://www.roboticsproceedings.org/rss18/p010.pdf},
	doi = {10.15607/RSS.2022.XVIII.010},
	language = {en},
	urldate = {2026-09-15},
	booktitle = {Robotics: {Science} and {Systems} {XVIII}},
	publisher = {Robotics: Science and Systems Foundation},
	author = {Pari, Jyothish and Shafiullah, Nur Muhammad (Mahi) and Arunachalam, Sridhar Pandian and Pinto, Lerrel},
	month = jun,
	year = {2022},
}

@article{swiechowski_monte_2023,
	title = {Monte {Carlo} {Tree} {Search}: a review of recent modifications and applications},
	volume = {56},
	issn = {1573-7462},
	shorttitle = {Monte {Carlo} {Tree} {Search}},
	url = {https://doi.org/10.1007/s10462-022-10228-y},
	doi = {10.1007/s10462-022-10228-y},
	language = {en},
	number = {3},
	urldate = {2026-09-14},
	journal = {Artificial Intelligence Review},
	publisher = {Springer},
	author = {Świechowski, Maciej and Godlewski, Konrad and Sawicki, Bartosz and Mańdziuk, Jacek},
	month = mar,
	year = {2023},
	pages = {2497--2562},
}

@inproceedings{castellini_scalable_2023,
	title = {Scalable {Safe} {Policy} {Improvement} via {Monte} {Carlo} {Tree} {Search}},
	issn = {2640-3498},
	url = {https://proceedings.mlr.press/v202/castellini23a.html},
	language = {en},
	urldate = {2026-09-13},
	booktitle = {Proceedings of the 40th {International} {Conference} on {Machine} {Learning}},
	publisher = {PMLR},
	author = {Castellini, Alberto and Bianchi, Federico and Zorzi, Edoardo and Simão, Thiago D. and Farinelli, Alessandro and Spaan, Matthijs T. J.},
	month = jul,
	year = {2023},
	pages = {3732--3756},
}

@inproceedings{grill_monte-carlo_2020,
	title = {Monte-{Carlo} {Tree} {Search} as {Regularized} {Policy} {Optimization}},
	issn = {2640-3498},
	url = {https://proceedings.mlr.press/v119/grill20a.html},
	language = {en},
	urldate = {2026-09-13},
	booktitle = {Proceedings of the 37th {International} {Conference} on {Machine} {Learning}},
	publisher = {PMLR},
	author = {Grill, Jean-Bastien and Altché, Florent and Tang, Yunhao and Hubert, Thomas and Valko, Michal and Antonoglou, Ioannis and Munos, Remi},
	month = nov,
	year = {2020},
	pages = {3769--3778},
}

@inproceedings{hafner_dream_2019,
	title = {Dream to {Control}: {Learning} {Behaviors} by {Latent} {Imagination}},
	shorttitle = {Dream to {Control}},
	url = {https://openreview.net/forum?id=S1lOTC4tDS},
	language = {en},
	urldate = {2026-06-30},
	booktitle = {International {Conference} on {Learning} {Representations}},
	author = {Hafner, Danijar and Lillicrap, Timothy and Ba, Jimmy and Norouzi, Mohammad},
	month = sep,
	year = {2019},
}

@inproceedings{wu_daydreamer_2023,
	title = {{DayDreamer}: {World} {Models} for {Physical} {Robot} {Learning}},
	issn = {2640-3498},
	shorttitle = {{DayDreamer}},
	url = {https://proceedings.mlr.press/v205/wu23c.html},
	language = {en},
	urldate = {2026-05-29},
	booktitle = {Proceedings of {The} 6th {Conference} on {Robot} {Learning}},
	publisher = {PMLR},
	author = {Wu, Philipp and Escontrela, Alejandro and Hafner, Danijar and Abbeel, Pieter and Goldberg, Ken},
	month = mar,
	year = {2023},
	pages = {2226--2240},
}

@inproceedings{lathrop_model_2024,
	title = {Model {Predictive} {Trees}: {Sample}-{Efficient} {Receding} {Horizon} {Planning} with {Reusable} {Tree} {Search}},
	issn = {2153-0866},
	shorttitle = {Model {Predictive} {Trees}},
	url = {https://ieeexplore.ieee.org/abstract/document/10802673},
	doi = {10.1109/IROS58592.2024.10802673},
	urldate = {2026-05-28},
	booktitle = {2024 {IEEE}/{RSJ} {International} {Conference} on {Intelligent} {Robots} and {Systems} ({IROS})},
	author = {Lathrop, John and Rivière, Benjamin and Alindogan, Jedidiah and Chung, Soon-Jo},
	month = oct,
	year = {2024},
	note = {ISSN: 2153-0866},
	pages = {7671--7678},
}

@inproceedings{tavakoli_learning_2020,
	title = {Learning to {Represent} {Action} {Values} as a {Hypergraph} on the {Action} {Vertices}},
	url = {https://openreview.net/forum?id=Xv_s64FiXTv},
	language = {en},
	urldate = {2026-05-28},
	booktitle = {International {Conference} on {Learning} {Representations}},
	author = {Tavakoli, Arash and Fatemi, Mehdi and Kormushev, Petar},
	month = oct,
	year = {2020},
}

@inproceedings{chebotar_q-transformer_2023,
	title = {Q-{Transformer}: {Scalable} {Offline} {Reinforcement} {Learning} via {Autoregressive} {Q}-{Functions}},
	issn = {2640-3498},
	shorttitle = {Q-{Transformer}},
	url = {https://proceedings.mlr.press/v229/chebotar23a.html},
	language = {en},
	urldate = {2026-05-28},
	booktitle = {Proceedings of {The} 7th {Conference} on {Robot} {Learning}},
	publisher = {PMLR},
	author = {Chebotar, Yevgen and Vuong, Quan and Hausman, Karol and Xia, Fei and Lu, Yao and Irpan, Alex and Kumar, Aviral and Yu, Tianhe and Herzog, Alexander and Pertsch, Karl and Gopalakrishnan, Keerthana and Ibarz, Julian and Nachum, Ofir and Sontakke, Sumedh Anand and Salazar, Grecia and Tran, Huong T. and Peralta, Jodilyn and Tan, Clayton and Manjunath, Deeksha and Singh, Jaspiar and Zitkovich, Brianna and Jackson, Tomas and Rao, Kanishka and Finn, Chelsea and Levine, Sergey},
	month = dec,
	year = {2023},
	pages = {3909--3928},
}

@inproceedings{tessler_distributional_2019,
	title = {Distributional {Policy} {Optimization}: {An} {Alternative} {Approach} for {Continuous} {Control}},
	volume = {32},
	shorttitle = {Distributional {Policy} {Optimization}},
	url = {https://proceedings.neurips.cc/paper_files/paper/2019/hash/72da7fd6d1302c0a159f6436d01e9eb0-Abstract.html},
	urldate = {2026-05-28},
	booktitle = {Advances in {Neural} {Information} {Processing} {Systems}},
	publisher = {Curran Associates, Inc.},
	author = {Tessler, Chen and Tennenholtz, Guy and Mannor, Shie},
	year = {2019},
}

@inproceedings{seyde_solving_2022,
	title = {Solving {Continuous} {Control} via {Q}-learning},
	url = {https://openreview.net/forum?id=U5XOGxAgccS},
	language = {en},
	urldate = {2026-05-28},
	booktitle = {The {Eleventh} {International} {Conference} on {Learning} {Representations}},
	author = {Seyde, Tim and Werner, Peter and Schwarting, Wilko and Gilitschenski, Igor and Riedmiller, Martin and Rus, Daniela and Wulfmeier, Markus},
	month = sep,
	year = {2022},
}

@article{tang_discretizing_2020,
	title = {Discretizing {Continuous} {Action} {Space} for {On}-{Policy} {Optimization}},
	volume = {34},
	copyright = {Copyright (c) 2020 Association for the Advancement of Artificial Intelligence},
	issn = {2374-3468},
	url = {https://ojs.aaai.org/index.php/AAAI/article/view/6059},
	doi = {10.1609/aaai.v34i04.6059},
	language = {en},
	number = {04},
	urldate = {2026-05-28},
	journal = {Proceedings of the AAAI Conference on Artificial Intelligence},
	author = {Tang, Yunhao and Agrawal, Shipra},
	month = apr,
	year = {2020},
	pages = {5981--5988},
}

@inproceedings{tavakoli_action_2018,
	address = {New Orleans, Louisiana, USA},
	series = {{AAAI}'18/{IAAI}'18/{EAAI}'18},
	title = {Action branching architectures for deep reinforcement learning},
	isbn = {978-1-57735-800-8},
	url = {https://dl.acm.org/doi/10.5555/3504035.3504541},
	urldate = {2026-05-27},
	booktitle = {Proceedings of the {Thirty}-{Second} {AAAI} {Conference} on {Artificial} {Intelligence} and {Thirtieth} {Innovative} {Applications} of {Artificial} {Intelligence} {Conference} and {Eighth} {AAAI} {Symposium} on {Educational} {Advances} in {Artificial} {Intelligence}},
	publisher = {AAAI Press},
	author = {Tavakoli, Arash and Pardo, Fabio and Kormushev, Petar},
	month = feb,
	year = {2018},
	pages = {4131--4138},
}

@inproceedings{sigurdson_spectral_2026,
	title = {Spectral {Decomposition} of {Inverse} {Dynamics} for {Fast} {Exploration} in {Model}-{Based} {Manipulation}},
	url = {http://arxiv.org/abs/2603.27796},
	doi = {10.48550/arXiv.2603.27796},
	urldate = {2026-05-27},
	booktitle = {2026 {IEEE} {International} {Conference} on {Robotics} and {Automation} ({ICRA})},
	author = {Sigurdson, Solvin and Riviere, Benjamin and Burdick, Joel},
	month = mar,
	year = {2026},
	note = {arXiv:2603.27796 [cs.RO]},
}

@inproceedings{lee_behavior_2024,
	title = {Behavior {Generation} with {Latent} {Actions}},
	issn = {2640-3498},
	url = {https://proceedings.mlr.press/v235/lee24y.html},
	language = {en},
	urldate = {2026-05-27},
	booktitle = {Proceedings of the 41st {International} {Conference} on {Machine} {Learning}},
	publisher = {PMLR},
	author = {Lee, Seungjae and Wang, Yibin and Etukuru, Haritheja and Kim, H. Jin and Shafiullah, Nur Muhammad Mahi and Pinto, Lerrel},
	month = jul,
	year = {2024},
	pages = {26991--27008},
}

@inproceedings{dadashi_continuous_2022,
	title = {Continuous {Control} with {Action} {Quantization} from {Demonstrations}},
	issn = {2640-3498},
	url = {https://proceedings.mlr.press/v162/dadashi22a.html},
	language = {en},
	urldate = {2026-05-27},
	booktitle = {Proceedings of the 39th {International} {Conference} on {Machine} {Learning}},
	publisher = {PMLR},
	author = {Dadashi, Robert and Hussenot, Léonard and Vincent, Damien and Girgin, Sertan and Raichuk, Anton and Geist, Matthieu and Pietquin, Olivier},
	month = jun,
	year = {2022},
	pages = {4537--4557},
}

@inproceedings{canas_learning_2012,
	title = {Learning {Probability} {Measures} with respect to {Optimal} {Transport} {Metrics}},
	volume = {25},
	url = {https://proceedings.neurips.cc/paper/2012/hash/c54e7837e0cd0ced286cb5995327d1ab-Abstract.html},
	urldate = {2026-05-25},
	booktitle = {Advances in {Neural} {Information} {Processing} {Systems}},
	publisher = {Curran Associates, Inc.},
	author = {Canas, Guillermo and Rosasco, Lorenzo},
	year = {2012},
}

@inproceedings{van_den_oord_neural_2017,
	title = {Neural {Discrete} {Representation} {Learning}},
	volume = {30},
	url = {https://proceedings.neurips.cc/paper_files/paper/2017/file/7a98af17e63a0ac09ce2e96d03992fbc-Paper.pdf},
	booktitle = {Advances in {Neural} {Information} {Processing} {Systems}},
	publisher = {Curran Associates, Inc.},
	author = {van den Oord, Aaron and Vinyals, Oriol and kavukcuoglu, koray},
	editor = {Guyon, I. and Luxburg, U. Von and Bengio, S. and Wallach, H. and Fergus, R. and Vishwanathan, S. and Garnett, R.},
	year = {2017},
}

@article{schrittwieser_mastering_2020,
	title = {Mastering {Atari}, {Go}, chess and shogi by planning with a learned model},
	volume = {588},
	copyright = {2020 The Author(s), under exclusive licence to Springer Nature Limited},
	issn = {1476-4687},
	url = {https://www.nature.com/articles/s41586-020-03051-4},
	doi = {10.1038/s41586-020-03051-4},
	language = {en},
	number = {7839},
	urldate = {2026-04-25},
	journal = {Nature},
	publisher = {Nature Publishing Group},
	author = {Schrittwieser, Julian and Antonoglou, Ioannis and Hubert, Thomas and Simonyan, Karen and Sifre, Laurent and Schmitt, Simon and Guez, Arthur and Lockhart, Edward and Hassabis, Demis and Graepel, Thore and Lillicrap, Timothy and Silver, David},
	month = dec,
	year = {2020},
	pages = {604--609},
}

@inproceedings{wu_discrete_2025,
	title = {Discrete {Policy}: {Learning} {Disentangled} {Action} {Space} for {Multi}-{Task} {Robotic} {Manipulation}},
	shorttitle = {Discrete {Policy}},
	url = {https://ieeexplore.ieee.org/document/11127630/},
	doi = {10.1109/ICRA55743.2025.11127630},
	urldate = {2026-01-12},
	booktitle = {2025 {IEEE} {International} {Conference} on {Robotics} and {Automation} ({ICRA})},
	author = {Wu, Kun and Zhu, Yichen and Li, Jinming and Wen, Junjie and Liu, Ning and Xu, Zhiyuan and Tang, Jian},
	month = may,
	year = {2025},
	pages = {8811--8818},
}

@inproceedings{luo_action-quantized_2023,
	title = {Action-{Quantized} {Offline} {Reinforcement} {Learning} for {Robotic} {Skill} {Learning}},
	issn = {2640-3498},
	url = {https://proceedings.mlr.press/v229/luo23a.html},
	language = {en},
	urldate = {2026-01-12},
	booktitle = {Proceedings of {The} 7th {Conference} on {Robot} {Learning}},
	publisher = {PMLR},
	author = {Luo, Jianlan and Dong, Perry and Wu, Jeffrey and Kumar, Aviral and Geng, Xinyang and Levine, Sergey},
	month = dec,
	year = {2023},
	pages = {1348--1361},
}

@article{silver_mastering_2016,
	title = {Mastering the game of {Go} with deep neural networks and tree search},
	volume = {529},
	copyright = {2016 Springer Nature Limited},
	issn = {1476-4687},
	url = {https://www.nature.com/articles/nature16961},
	doi = {10.1038/nature16961},
	language = {en},
	number = {7587},
	urldate = {2026-01-07},
	journal = {Nature},
	publisher = {Nature Publishing Group},
	author = {Silver, David and Huang, Aja and Maddison, Chris J. and Guez, Arthur and Sifre, Laurent and van den Driessche, George and Schrittwieser, Julian and Antonoglou, Ioannis and Panneershelvam, Veda and Lanctot, Marc and Dieleman, Sander and Grewe, Dominik and Nham, John and Kalchbrenner, Nal and Sutskever, Ilya and Lillicrap, Timothy and Leach, Madeleine and Kavukcuoglu, Koray and Graepel, Thore and Hassabis, Demis},
	month = jan,
	year = {2016},
	pages = {484--489},
}

@article{silver_general_2018,
	title = {A general reinforcement learning algorithm that masters chess, shogi, and {Go} through self-play},
	volume = {362},
	url = {https://www.science.org/doi/10.1126/science.aar6404},
	doi = {10.1126/science.aar6404},
	number = {6419},
	urldate = {2026-01-07},
	journal = {Science},
	publisher = {American Association for the Advancement of Science},
	author = {Silver, David and Hubert, Thomas and Schrittwieser, Julian and Antonoglou, Ioannis and Lai, Matthew and Guez, Arthur and Lanctot, Marc and Sifre, Laurent and Kumaran, Dharshan and Graepel, Thore and Lillicrap, Timothy and Simonyan, Karen and Hassabis, Demis},
	month = dec,
	year = {2018},
	pages = {1140--1144},
}

@inproceedings{hubert_learning_2021,
	title = {Learning and {Planning} in {Complex} {Action} {Spaces}},
	issn = {2640-3498},
	url = {https://proceedings.mlr.press/v139/hubert21a.html},
	language = {en},
	urldate = {2025-12-12},
	booktitle = {Proceedings of the 38th {International} {Conference} on {Machine} {Learning}},
	publisher = {PMLR},
	author = {Hubert, Thomas and Schrittwieser, Julian and Antonoglou, Ioannis and Barekatain, Mohammadamin and Schmitt, Simon and Silver, David},
	month = jul,
	year = {2021},
	pages = {4476--4486},
}

@article{riviere_monte_2024,
	title = {Monte {Carlo} tree search with spectral expansion for planning with dynamical systems},
	volume = {9},
	url = {https://www.science.org/doi/10.1126/scirobotics.ado1010},
	doi = {10.1126/scirobotics.ado1010},
	number = {97},
	urldate = {2025-12-08},
	journal = {Science Robotics},
	publisher = {American Association for the Advancement of Science},
	author = {Rivière, Benjamin and Lathrop, John and Chung, Soon-Jo},
	month = dec,
	year = {2024},
	pages = {eado1010},
}

@article{chi_diffusion_2024,
	title = {Diffusion policy: {Visuomotor} policy learning via action diffusion},
	issn = {0278-3649},
	shorttitle = {Diffusion policy},
	url = {https://doi.org/10.1177/02783649241273668},
	doi = {10.1177/02783649241273668},
	language = {en},
	urldate = {2025-01-08},
	journal = {The International Journal of Robotics Research},
	publisher = {SAGE Publications Ltd STM},
	author = {Chi, Cheng and Xu, Zhenjia and Feng, Siyuan and Cousineau, Eric and Du, Yilun and Burchfiel, Benjamin and Tedrake, Russ and Song, Shuran},
	month = oct,
	year = {2024},
}

@book{villani_optimal_2009,
	address = {Berlin, Heidelberg},
	series = {Grundlehren der mathematischen {Wissenschaften}},
	title = {Optimal {Transport}},
	volume = {338},
	copyright = {http://www.springer.com/tdm},
	isbn = {978-3-540-71049-3 978-3-540-71050-9},
	url = {http://link.springer.com/10.1007/978-3-540-71050-9},
	doi = {10.1007/978-3-540-71050-9},
	urldate = {2025-01-30},
	publisher = {Springer},
	author = {Villani, Cédric},
	year = {2009},
}

\section*{APPENDIX}

\subsection*{Proof of value approximation bound}

The proof follows the same structure as Lemma 1 in~\cite{riviere_monte_2024}; however, we generalize the result from deterministic state transitions to probabilistic state transitions.

\begin{assumption} \label{assumption:reward}
    We assume that the reward function is $L_r$-Lipschitz with respect to state transition: 
    \begin{align}
        |r(s,a)-r(s,a')| \leq L_r \cdot W_1\big(p(\cdot|s,a),p(\cdot|s,a')\big).
    \end{align}
\end{assumption}

\begin{definition}[Bellman optimality operator]
    Given $\mathcal{M}=(\mathcal{S},\mathcal{A},p,r,\gamma)$, the Bellman optimality operator $\mathcal{T}$ is:
    \begin{align}
        (\mathcal{T}V)(s) = \max_{a\in\mathcal{A}} \left[r(s,a) + \gamma \mathbb{E}_{s^\prime\sim p(\cdot|s,a)}[V(s^\prime)]\right].
    \end{align}
\end{definition}

\begin{lemma}[Bellman optimality fixed point] \label{lemma:bellman_fixed_point}
    For the optimal value function $V^*$ of an MDP $\mathcal{M}$,
    \begin{align}
        (\mathcal{T}V^*)(s)=V^*(s).
    \end{align}
\end{lemma}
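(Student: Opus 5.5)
We would prove the fixed-point identity by establishing the two inequalities $V^*\le \mathcal{T}V^*$ and $\mathcal{T}V^*\le V^*$ pointwise. Throughout we assume the standing regularity implicit in the paper: rewards are bounded, $|r|\le R_{\max}$, so that every $V^{\pi}$ and $V^*$ are bounded by $R_{\max}/(1-\gamma)$. We also assume the maximum over $\mathcal{A}$ in the definition of $\mathcal{T}$ is attained or can be read as a supremum; replacing $\max$ by $\sup$ changes nothing below.

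\textbf{Step 1 ($V^*\le\mathcal{T}V^*$).} We fix an arbitrary, possibly history-dependent, policy $\pi$ and condition on the first action and first transition. This gives the one-step decomposition
\begin{align}
V^{\pi}(s)=\mathbb{E}_{a\sim\pi(\cdot|s)}\Big[r(s,a)+\gamma\,\mathbb{E}_{s'\sim p(\cdot|s,a)}\big[V^{\pi_{s,a,s'}}(s')\big]\Big], \nonumber
\end{align}
where $\pi_{s,a,s'}$ denotes the continuation policy after the first step. By definition of $V^*$ as a supremum over policies, $V^{\pi_{s,a,s'}}(s')\le V^*(s')$. Hence the bracketed term is at most $r(s,a)+\gamma\mathbb{E}[V^*(s')]$, which in turn is at most $(\mathcal{T}V^*)(s)$. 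An average of quantities each bounded by $(\mathcal{T}V^*)(s)$ is itself bounded by it. Taking the supremum over $\pi$ then gives $V^*(s)\le(\mathcal{T}V^*)(s)$.

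\textbf{Step 2 ($\mathcal{T}V^*\le V^*$).} We fix $s$, an action $a\in\mathcal{A}$, and $\varepsilon>0$. For each next state $s'$ we choose an $\varepsilon$-optimal policy $\pi^{\varepsilon}_{s'}$, so that $V^{\pi^{\varepsilon}_{s'}}(s')\ge V^*(s')-\varepsilon$. We then form the concatenated policy that plays $a$ at $s$ and, after observing $s'$, follows $\pi^{\varepsilon}_{s'}$. Its value satisfies
\begin{align}
V^*(s)\ \ge\ r(s,a)+\gamma\,\mathbb{E}_{s'\sim p(\cdot|s,a)}\big[V^*(s')\big]-\gamma\varepsilon. \nonumber
\end{align}
Taking the supremum over $a$ and letting $\varepsilon\to0$ yields $(\mathcal{T}V^*)(s)\le V^*(s)$. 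Combining Steps 1 and 2 gives the lemma.

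\textbf{Main obstacle.} The delicate point is Step 2 on continuous $\mathcal{S}$: the map $s'\mapsto\pi^{\varepsilon}_{s'}$ must be chosen measurably, so that the concatenated policy is well defined and $V^*$ is measurable, which is needed for the expectation to make sense. We would handle this in one of two ways. The first is to invoke a measurable-selection theorem, which applies under standard Borel assumptions. The second, which we would try first because it avoids selection altogether, is a contraction route. Boundedness of $r$ makes $\mathcal{T}$ a $\gamma$-contraction in $\|\cdot\|_\infty$, since $|\max f-\max g|\le\max|f-g|$. Banach's fixed-point theorem then gives a unique fixed point $\bar V$. Next, $\bar V\ge V^{\pi}$ for every $\pi$, because iterating Step 1's inequality gives $V^{\pi}\le\mathcal{T}^nV^{\pi}\to\bar V$. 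Finally, $\bar V$ is approached by the values of $\varepsilon$-greedy stationary policies with respect to $\bar V$, by the standard estimate $\|V^{\pi_\varepsilon}-\bar V\|_\infty\le\varepsilon/(1-\gamma)$. Together these force $\bar V=V^*$.
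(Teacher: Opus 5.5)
\textbf{Relation to the paper.} The paper does not prove this lemma. In the appendix it is listed as standard background for the proof of Theorem~\ref{theorem:value_bound}, next to the contraction lemma, which is also unproved. The regularity it silently needs is never stated: bounded rewards, $\sup$ in place of $\max$, and measurability of $V^*$. Your Steps 1 and 2 are the textbook two-inequality argument. They are correct under your boundedness assumption once measurability is secured, for example by a selection theorem for Borel models that supplies universally measurable $\varepsilon$-optimal policies. Making these hypotheses explicit is what your proof adds over the paper's bare appeal to the Bellman equation.

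\textbf{A false step in the contraction route.} You say you would try this route first, but one of its steps fails. The inequality $V^{\pi}\le\mathcal{T}V^{\pi}$, and hence $V^{\pi}\le\mathcal{T}^nV^{\pi}$, holds for stationary Markov $\pi$. It fails for non-stationary or history-dependent policies, over which $V^*$ is also a supremum. The reason is that Step 1 bounds $V^{\pi}(s)$ through the continuation values $V^{\pi_{s,a,s'}}(s')$, not through $V^{\pi}(s')$.

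For a counterexample, let both actions at $S$ move to $G$ with reward $0$. At $G$, let action \emph{good} pay $1$ and \emph{bad} pay $0$, both leading to an absorbing zero-reward state $Z$. For the policy that plays \emph{bad} at time $0$ and \emph{good} afterwards, $V^{\pi}(S)=\gamma$ but $(\mathcal{T}V^{\pi})(S)=\gamma V^{\pi}(G)=0$.

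The repair is to run Step 1's conditioning as an induction over all policies simultaneously. Set $W\equiv R_{\max}/(1-\gamma)$ and suppose $V^{\pi'}\le\mathcal{T}^nW$ for every policy $\pi'$. Since continuation policies are policies, $V^{\pi}(s)\le\mathbb{E}_{a\sim\pi(\cdot|s)}\bigl[r(s,a)+\gamma\,\mathbb{E}_{s'\sim p(\cdot|s,a)}(\mathcal{T}^nW)(s')\bigr]\le(\mathcal{T}^{n+1}W)(s)$. Then $\mathcal{T}^nW\to\bar V$ yields $V^{\pi}\le\bar V$.

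\textbf{The route does not avoid selection.} Banach's theorem requires $\mathcal{T}$ to map bounded measurable functions to measurable ones. The $\varepsilon$-greedy stationary policy requires a measurable map $s\mapsto a_{\varepsilon}(s)$ with $r(s,a_{\varepsilon}(s))+\gamma\,\mathbb{E}_{p(\cdot|s,a_{\varepsilon}(s))}[\bar V]\ge\bar V(s)-\varepsilon$. For uncountable $\mathcal{A}$, both are selection-type statements.

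They are elementary in two cases. One is when $\mathcal{A}$ is countable, such as the finite quantized action set of $\mathcal{M}_2$. The other is when continuity in $a$ lets you replace the supremum over $\mathcal{A}$ by one over a countable dense subset; then taking the first index within $\varepsilon$ of the supremum is a measurable selector. In general you still need a selection theorem, only for single actions rather than for policies indexed by $s'$.

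With these two repairs the contraction route is complete. It also delivers the contraction lemma and uniqueness of the fixed point in the same argument.
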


\begin{lemma}[Bellman contraction] \label{lemma:bellman_contraction}
    For any two value functions $V_1$ and $V_2$, the Bellman optimality operator satisfies
    \begin{align}
        \|\mathcal{T}V_1-\mathcal{T}V_2\|_\infty \leq \gamma\|V_1-V_2\|_\infty.
    \end{align}
\end{lemma}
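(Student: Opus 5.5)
The plan is the standard argument, done pointwise in $s$ and then taking the supremum over $s$. Fix an arbitrary state $s\in\mathcal{S}$ and define, for each $a\in\mathcal{A}$,
\begin{align}
    Q_j(s,a) = r(s,a) + \gamma\,\mathbb{E}_{s^\prime\sim p(\cdot|s,a)}[V_j(s^\prime)], \quad j\in\{1,2\}, \nonumber
\end{align}
so that $(\mathcal{T}V_j)(s)=\sup_{a\in\mathcal{A}} Q_j(s,a)$. I will read the $\max$ in the definition of $\mathcal{T}$ as a supremum, so that the argument does not depend on attainment over the continuous $\mathcal{A}$.

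The first key step is the elementary inequality
\begin{align}
    \Big|\sup_{a} f(a) - \sup_{a} g(a)\Big| \leq \sup_{a} |f(a)-g(a)|, \nonumber
\end{align}
valid for any bounded real functions $f,g$ on $\mathcal{A}$. To prove it, note that for every $a$ we have $f(a)\leq g(a)+\sup_{a^\prime}|f(a^\prime)-g(a^\prime)|\leq \sup_{a^\prime} g(a^\prime)+\sup_{a^\prime}|f(a^\prime)-g(a^\prime)|$. Taking the supremum over $a$ gives one direction, and the other follows by symmetry. Applied with $f=Q_1(s,\cdot)$ and $g=Q_2(s,\cdot)$, this gives
\begin{align}
    |(\mathcal{T}V_1)(s)-(\mathcal{T}V_2)(s)| \leq \sup_{a} |Q_1(s,a)-Q_2(s,a)|. \nonumber
\end{align}

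The second step is that, in the difference $Q_1-Q_2$, the reward terms cancel because both value functions are evaluated on the same MDP. What remains is
\begin{align}
    \gamma\,\big|\mathbb{E}_{s^\prime\sim p(\cdot|s,a)}[V_1(s^\prime)-V_2(s^\prime)]\big|. \nonumber
\end{align}
By monotonicity of expectation under a probability measure, this is at most $\gamma\,\mathbb{E}|V_1-V_2|\leq\gamma\|V_1-V_2\|_\infty$, uniformly in $a$. Combining the two steps and taking the supremum over $s$ yields the claimed contraction.

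I do not expect a genuine obstacle here. The only point needing care is the $\sup$ versus $\max$ issue, together with ensuring the quantities are finite. I would state that $V_1,V_2$ are bounded and measurable, so that the expectations are well defined and the $Q_j$ are bounded; this is what makes the elementary inequality applicable. If either $V_j$ is unbounded, the right-hand side is $+\infty$ and the bound holds trivially.
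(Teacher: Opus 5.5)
Your argument is correct and is the standard proof. The paper gives no proof of this lemma; it simply quotes it as a known fact for use in the proof of Theorem~\ref{theorem:value_bound}, where it is applied to $\mathcal{T}_1$ acting on $V^*_1$ and $V^*_2$. Your pointwise bound $|\sup_a Q_1(s,a)-\sup_a Q_2(s,a)|\le\sup_a|Q_1(s,a)-Q_2(s,a)|$, with the reward cancelling because both $Q_j$ come from the same MDP, is exactly what is implicitly invoked there.

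One small correction to your closing remark. Two unbounded functions can have bounded difference (e.g.\ $V_1=V_2+c$), so the right-hand side need not be $+\infty$ in that case. Also, boundedness of $V_j$ alone does not make $Q_j$ bounded unless $r(s,\cdot)$ is bounded above as well. The clean hypothesis is bounded $r$, $V_1$, $V_2$, which is the setting in which the paper applies the lemma to $V^*_1,V^*_2$.
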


\begin{lemma}[Kantorovich-Rubinstein inequality~\cite{villani_optimal_2009}] \label{lemma:kr}
    For any $L_f$-Lipschitz function $f$ and probability distributions $\mu_1,\mu_2$ with finite first moments,
    \begin{align}
        |\mathbb{E}_{\mu_1}[f]-\mathbb{E}_{\mu_2}[f]| \leq L_f W_1(\mu_1,\mu_2).
    \end{align}
\end{lemma}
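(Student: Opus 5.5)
We only need the ``easy'' direction of Kantorovich--Rubinstein duality, and we prove it directly from the primal (coupling) definition of $W_1$ on a metric space $(\mathcal{X},\epsilon)$:
\begin{align}
    W_1(\mu_1,\mu_2) = \inf_{\pi\in\Pi(\mu_1,\mu_2)} \int \epsilon(x,y)\,\mathrm{d}\pi(x,y),
\end{align}
where $\Pi(\mu_1,\mu_2)$ is the set of probability measures on $\mathcal{X}\times\mathcal{X}$ with marginals $\mu_1$ and $\mu_2$. The full duality theorem, which asserts that the supremum over $1$-Lipschitz $f$ equals $W_1$, requires a minimax or cyclical-monotonicity argument. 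That argument is unnecessary here, since Theorem~\ref{theorem:value_bound} only uses the inequality.

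\textbf{Step 1: integrability.} First we check that both expectations are finite. Fix any reference point $x_0$. The Lipschitz property gives
\begin{align}
    |f(x)| \leq |f(x_0)| + L_f\,\epsilon(x,x_0).
\end{align}
The finite-first-moment hypothesis says $\int \epsilon(x,x_0)\,\mathrm{d}\mu_i<\infty$ for $i=1,2$. Hence $f\in L^1(\mu_1)\cap L^1(\mu_2)$, and $\mathbb{E}_{\mu_i}[f]$ is well defined. The same hypothesis, together with the triangle inequality, shows that $\int \epsilon(x,y)\,\mathrm{d}\pi<\infty$ for every coupling $\pi$. Also, $\Pi(\mu_1,\mu_2)$ is nonempty because it contains the product measure $\mu_1\otimes\mu_2$, so $W_1(\mu_1,\mu_2)$ is finite.

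\textbf{Step 2: bound for a fixed coupling.} Fix an arbitrary coupling $\pi\in\Pi(\mu_1,\mu_2)$. Because the marginals of $\pi$ are $\mu_1$ and $\mu_2$, and $f$ is integrable, we can write
\begin{align}
    \mathbb{E}_{\mu_1}[f]-\mathbb{E}_{\mu_2}[f] = \int \big(f(x)-f(y)\big)\,\mathrm{d}\pi(x,y).
\end{align}
Moving the absolute value inside the integral and applying the Lipschitz bound $|f(x)-f(y)|\leq L_f\,\epsilon(x,y)$ pointwise gives
\begin{align}
    |\mathbb{E}_{\mu_1}[f]-\mathbb{E}_{\mu_2}[f]| \leq L_f\int \epsilon(x,y)\,\mathrm{d}\pi(x,y).
\end{align}

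\textbf{Step 3: infimum over couplings.} The left-hand side does not depend on $\pi$. Taking the infimum of the right-hand side over $\pi\in\Pi(\mu_1,\mu_2)$ therefore yields exactly $L_f W_1(\mu_1,\mu_2)$, which proves the lemma. No attainment of an optimal coupling is needed.

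The only delicate point is Step 1. Splitting $\int (f(x)-f(y))\,\mathrm{d}\pi$ into the two marginal expectations is legitimate only if $f$ is integrable under each marginal. This is precisely what the finite-first-moment assumption guarantees. Everything after that is a pointwise estimate followed by taking an infimum.
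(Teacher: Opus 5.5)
Your proof is correct, and it takes a different route from the paper because the paper gives no proof of this lemma. It simply cites Villani for the Kantorovich--Rubinstein duality
\[
W_1(\mu_1,\mu_2)=\sup\Big\{\textstyle\int f\,\mathrm{d}(\mu_1-\mu_2) : f \text{ is } 1\text{-Lipschitz}\Big\},
\]
and the stated inequality follows by applying this to $f/L_f$. You instead prove only the direction that is used, directly from the coupling definition: write the difference of expectations as $\int (f(x)-f(y))\,\mathrm{d}\pi$ for an arbitrary coupling, bound the integrand by $L_f\,\epsilon(x,y)$, and take the infimum over $\pi$.

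What your route buys is self-containedness and minimal hypotheses. It needs nothing beyond the definition of $W_1$ itself: no duality or minimax machinery, and none of the Polish-space / lower-semicontinuous-cost assumptions under which Villani states the full theorem. It also makes explicit where the finite-first-moment hypothesis enters, namely integrability of $f$ under each marginal, so that the integral against $\pi$ can legitimately be split.

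What the citation buys is the converse direction: the constant is sharp, since for each pair $\mu_1,\mu_2$ there are $1$-Lipschitz functions whose expectation gap comes arbitrarily close to $W_1(\mu_1,\mu_2)$. The proof of Theorem~\ref{theorem:value_bound} never uses this, so your weaker but elementary statement is exactly what the argument needs.
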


\noindent\textbf{Proof of Theorem~\ref{theorem:value_bound}}
Let $\mathcal{T}_1$ and $\mathcal{T}_2$ denote the Bellman optimality operators of $\mathcal{M}_1$ and $\mathcal{M}_2$. By Lemma~\ref{lemma:bellman_fixed_point},
\begin{align}
    \|V^*_1-V^*_2\|_\infty = \|\mathcal{T}_1V^*_1-\mathcal{T}_2V^*_2\|_\infty.
\end{align} Adding and subtracting $\mathcal{T}_1V^*_2$, the triangle inequality gives
\begin{align}
    \|V^*_1-V^*_2\|_\infty \leq \|\mathcal{T}_1V^*_1-\mathcal{T}_1V^*_2\|_\infty + \|\mathcal{T}_1V^*_2-\mathcal{T}_2V^*_2\|_\infty. \nonumber
\end{align} Applying Lemma~\ref{lemma:bellman_contraction},
\begin{align}
    \|V^*_1 {-} V^*_2\|_\infty \leq \gamma\|V^*_1 {-} V^*_2\|_\infty + \|\mathcal{T}_1V^*_2 {-} \mathcal{T}_2V^*_2\|_\infty. \label{eq:og_bound}
\end{align}

To bound the term $\|\mathcal{T}_1V^*_2-\mathcal{T}_2V^*_2\|_\infty$ in (\ref{eq:og_bound}). We first define the one-step lookahead value under $V^*_2$ as
\begin{align}
    Q_{V^*_2}(s,a) = r(s,a) + \gamma \mathbb{E}_{s^\prime\sim p(\cdot|s,a)}[V^*_2(s^\prime)].
\end{align}
For any state $s\in\mathcal{S}$ and actions $a_1\in\mathcal{A}_1$ and $a_2\in\mathcal{A}_2$, denote $\mu_1=p(\cdot|s,a_1)$ and $\mu_2=p(\cdot|s,a_2)$. By Assumption~\ref{assumption:reward},
\begin{align}
    |r(s,a_1)-r(s,a_2)| \leq L_r W_1(\mu_1,\mu_2).
\end{align} By Lemma~\ref{lemma:kr},
\begin{gather}
    |\mathbb{E}_{\mu_1}V^*_2-\mathbb{E}_{\mu_2}V^*_2| \leq L_{V^*_2}W_1(\mu_1,\mu_2), \\
    |Q_{V^*_2}(s,a_1) {-} Q_{V^*_2}(s,a_2)| \leq (L_r+\gamma L_V)W_1(\mu_1,\mu_2). \label{eq:pairwise_q_bound}
\end{gather} 

To bound how much $\mathcal{T}_1V^*_2$ can exceed $\mathcal{T}_2V^*_2$, define
\begin{align}
    a^*_1 &= \argmax_{a_1\in\mathcal{A}_1}Q_{V^*_2}(s,a_1), \\
    \tilde{a}_2 &= \argmin_{a_2\in\mathcal{A}_2}W_1\big(p(\cdot|s,a^*_1),p(\cdot|s,a_2)\big),
\end{align} since $ (\mathcal{T}_1V^*_2)(s) {=} Q_{V^*_2}(s,a^*_1)$ and $(\mathcal{T}_2V^*_2)(s) {\geq} Q_{V^*_2}(s,\tilde{a}_2)$, the bound of (\ref{eq:pairwise_q_bound}) gives
\begin{align}
    & (\mathcal{T}_1V^*_2)(s)-(\mathcal{T}_2V^*_2)(s) \nonumber \\
    & \leq |Q_{V^*_2}(s,a^*_1)-Q_{V^*_2}(s,\tilde{a}_2)| \nonumber\\
    & \leq (L_r+\gamma L_V)\sup_{\mathcal{A}_1}\inf_{\mathcal{A}_2}W_1\big(p(\cdot|s,a_1),p(\cdot|s,a_2)\big).  \label{eq:bellman_bound_direction_1}
\end{align}
Exchanging the roles of $\mathcal{A}_1$ and $\mathcal{A}_2$ similarly bounds how much $\mathcal{T}_2V^*_2$ can exceed $\mathcal{T}_1V^*_2$:
\begin{align}
    & (\mathcal{T}_2V^*_2)(s) - (\mathcal{T}_1V^*_2)(s) \nonumber \\
    & \leq (L_r+\gamma L_V)\sup_{\mathcal{A}_2}\inf_{\mathcal{A}_1}W_1\big(p(\cdot|s,a_1),p(\cdot|s,a_2)\big). \label{eq:bellman_bound_direction_2}
\end{align}
Combining (\ref{eq:bellman_bound_direction_1}) and (\ref{eq:bellman_bound_direction_2}) leads to the Hausdorff distance,
\begin{align}
    & |(\mathcal{T}_1V^*_2)(s)-(\mathcal{T}_2V^*_2)(s)| \nonumber \\
    & \leq (L_r+\gamma L_V)d_{W_1}\big(R_{\mathcal{M}_1}(s),R_{\mathcal{M}_2}(s)\big).
\end{align} Taking the supremum over $s\in\mathcal{S}$ for both sides,
\begin{align}
    & \sup_{s\in\mathcal{S}}|(\mathcal{T}_1V^*_2)(s)-(\mathcal{T}_2V^*_2)(s)| \nonumber\\
    & \leq (L_r+\gamma L_V)\sup_{s\in\mathcal{S}}d_{W_1}\big(R_{\mathcal{M}_1}(s),R_{\mathcal{M}_2}(s)\big),
\end{align} which gives
\begin{align}
    \|\mathcal{T}_1V^*_2-\mathcal{T}_2V^*_2\|_\infty \leq (L_r+\gamma L_V)B(\mathcal{M}_1,\mathcal{M}_2). \label{eq:second_term_bound}
\end{align}
Substituting (\ref{eq:second_term_bound}) into (\ref{eq:og_bound}) gives
\begin{align}
    & \|V^*_1-V^*_2\|_\infty \nonumber \\
    & \leq \gamma\|V^*_1-V^*_2\|_\infty + (L_r+\gamma L_V) \cdot B(\mathcal{M}_1,\mathcal{M}_2).
\end{align} Therefore,
\begin{align}
    \|V^*_1-V^*_2\|_\infty \leq \frac{L_r+\gamma L_V}{1-\gamma}B(\mathcal{M}_1,\mathcal{M}_2),
\end{align}
which completes the proof.

\end{document}